\definecolor{darkgreen}{rgb}{0,0.5,0}
\theoremstyle{plain}
\newtheorem{theorem}{Theorem}
\theoremstyle{definition}
\newtheorem{definition}[theorem]{Definition}
\newtheorem{problem}{Problem}
\newtheorem{example}{Example}
\theoremstyle{remark}
\crefname{problem}{Problem}{Problems}
\crefname{example}{Example}{Examples}
\newcommand{\bSigma}{\mathbf{\Sigma}}
\newcommand{\bA}{\mathbf{A}}
\newcommand{\bB}{\mathbf{B}}
\newcommand{\bE}{\mathbf{E}}
\newcommand{\bI}{\mathbf{I}}
\newcommand{\bL}{\mathbf{L}}
\newcommand{\bM}{\mathbf{M}}
\newcommand{\bN}{\mathbf{N}}
\newcommand{\bP}{\mathbf{P}}
\newcommand{\bQ}{\mathbf{Q}}
\newcommand{\bR}{\mathbf{R}}
\newcommand{\bS}{\mathbf{S}}
\newcommand{\bT}{\mathbf{T}}
\newcommand{\bU}{\mathbf{U}}
\newcommand{\bV}{\mathbf{V}}
\newcommand{\bW}{\mathbf{W}}
\newcommand{\bX}{\mathbf{X}}
\newcommand{\bY}{\mathbf{Y}}
\newcommand{\cA}{\mathcal{A}}
\newcommand{\cC}{\mathcal{C}}
\newcommand{\cG}{\mathcal{G}}
\newcommand{\cL}{\mathcal{L}}
\newcommand{\cS}{\mathcal{S}}
\newcommand{\cT}{\mathcal{T}}
\newcommand{\cW}{\mathcal{W}}
\newcommand{\N}{\mathbb{N}}
\newcommand{\R}{\mathbb{R}}
\DeclareMathOperator*{\argmax}{argmax}
\title{A Partition Cover Approach for Tokenization}
\author{
Jia Peng Lim\\
Singapore Management University\\
\texttt{jiapeng.lim.2021@phdcs.smu.edu.sg}
\And
Shawn Tan\\
MIT-IBM Watson AI Lab\\
\texttt{shawntan@ibm.com}
\And
Davin Choo\thanks{Equal Advising.}$^{\hspace{5pt}}$\thanks{Part of the work was done while the author was affiliated with the National University of Singapore, Singapore}\\
Harvard University\\
\texttt{davinchoo@seas.harvard.edu}
\And
Hady W. Lauw\footnotemark[1]\\
Singapore Management University\\
\texttt{hadywlauw@smu.edu.sg}
}
\begin{document}

\maketitle
\setlist[enumerate]{leftmargin=1.5em}
\begin{abstract}
Tokenization is the process of encoding strings into tokens of a fixed vocabulary size, and is widely utilized in Natural Language Processing applications.
The leading tokenization algorithm today is Byte-Pair Encoding (\textsc{BPE}), which formulates the tokenization problem as a compression problem and tackles it by performing sequences of merges.
In this work, we formulate tokenization as an optimization objective, show that it is NP-hard via a simple reduction from vertex cover, and propose a polynomial-time greedy algorithm \textsc{GreedTok}.
Our formulation naturally relaxes to the well-studied weighted maximum coverage problem which has a simple $(1 - 1/e)$-approximation algorithm \textsc{GreedWMC}.
Through empirical evaluations on real-world corpora, we show that \textsc{GreedTok} outperforms \textsc{BPE} and \textsc{Unigram} on compression and achieves a covering score comparable to \textsc{GreedWMC}.
Finally, our extensive pre-training for two transformer-based language models with 1 billion parameters, comparing the choices of \textsc{BPE} and \textsc{GreedTok} as the tokenizer, shows that \textsc{GreedTok} achieves a lower bit per byte even when we control for either the total dataset proportion or total training tokens.
\end{abstract}
\section{Introduction}
\label{sec:introduction}

Tokenization encodes text into tokens from a fixed vocabulary and is fundamental to Natural Language Processing (NLP) applications.
With Large Language Models (LLMs) growing in prominence, understanding tokenization has become increasingly important, as it plays an integral role in their architectures and even modest gains in efficiency can yield substantial computational savings.
Furthermore, LLMs such as LLaMA \cite{touvron2023llama} and Mistral \cite{jiang2023mistral} use fixed-length context windows, which will benefit from tokenizers with better compression utility that enables them to fit more information within their context window.
There are also prompt-based \cite{wei2022chain,yao2023tree} and fine-tuning \cite{fan2023chain} techniques that increase the number of processed tokens to improve model performance.

A common way to formalize the tokenization problem is as a compression task that minimizes the ratio of tokens produced when tokenizing the input data.
The leading tokenization algorithm today is Byte-Pair Encoding (\textsc{BPE}) \cite{gage1994new,sennrich2016neural,kaddour2023challenges}, which formulates the tokenization problem as a compression problem and tackles it by performing a sequence of pairwise merges.
Due to its popularity, there have been a multitude of recent works analyzing the theoretical properties of \textsc{BPE} \cite{zouhar2023formal, kozma2024theoretical, whittington2024tokenisation}.
Another approach frames tokenization as a pathing/sequence problem \cite{kudo2018subword, schmidt2024tokenization}, Unigram \cite{kudo2018subword} is one such example and is favored by models with bidirectional contexts, i.e. \cite{yang2019xlnet} and \cite{raffel2020exploring}. Although \textsc{Unigram} is believed to perform better in pre-training tasks \cite{bostrom2020byte}, its compression performance is weaker compared to BPE \cite{schmidt2024tokenization}, affecting its adoption for large-scale expensive pre-training in language modeling.

\textbf{Contributions.}
In this work, we deviate from the usual path, sequence, and merge-based tokenization formulations. Instead, we propose to examine the tokenization problem as a problem of \emph{covering}.

\textbf{C1. Partition cover formulation.}
We introduce a partition cover optimization formulation of the tokenization problem that goes beyond prior merge-based approaches \cite{zouhar2023formal, kozma2024theoretical}.
These earlier methods rely on bottom-up pairwise merges from an existing token set.
However, the idea of tokenization is simply to efficiently represent a corpus with a judiciously selected set of tokens, whose construction is \emph{independent} of such merge patterns.
Our formulation subsumes these prior formulations in the following sense: all merge-based solutions are valid solutions to our formulation, but a solution to our formulation need not be based on bottom-up pairwise merges.
    
\textbf{C2. Simple NP-hardness proof.}
We provide a simple and intuitive proof that tokenization is NP-hard in our optimization objective formulation via a reduction from vertex cover \cite{karp1972reducibility}.
Although there has been a recent concurrent work \cite{whittington2024tokenisation} that also showed that the tokenization problem is NP-hard, our proof is arguably simpler due to our formulation.

\textbf{C3. New polynomial-time tokenizer.}
We propose a polynomial-time greedy algorithm \textsc{GreedTok} that does not rely on a sequence of pairwise token merges or path construction.
Evaluation on four real-world corpora shows that \textsc{GreedTok} outperforms \textsc{BPE} and \textsc{Unigram} with a stronger compression of $\sim$3\% tokens per word.
Our implementation and compression evaluations can be found at \url{https://github.com/PreferredAI/pcatt/}; see supplementary materials.

\textbf{C4. Downstream comparison.}
Preliminary analysis revealed that \textsc{GreedTok}'s token sets simultaneously attain \textsc{BPE}'s compressibility and \textsc{Unigram}'s token quality.
To empirically validate downstream performance, we pre-trained two transformer-based LLMs with 1 billion parameters that differ only in the use of \textsc{GreedTok} or \textsc{BPE} as the tokenizer algorithm.
Our results show that \textsc{GreedTok} outperforms \textsc{BPE} in both common benchmark tasks and in bits per byte, even after controlling for either the total dataset proportion or total training tokens.
    
\textbf{C5. Empirical approximation of objective.}
Our partition cover formulation naturally relaxes to the well-studied weighted maximum coverage problem \cite{karp1972reducibility,cohen2008generalized} which has a simple $(1 - 1/e)$-approximation algorithm \textsc{GreedWMC} \cite{hochbaum1996approximating}.
We empirically show that \textsc{GreedTok} and \textsc{GreedWMC} achieve a comparable objective function value for large $k$, despite the latter being a relaxed problem. 
Although a formal approximation guarantee for \textsc{GreedTok} currently escapes us, our analysis holds for practical scenarios encountered by NLP practitioners and this empirical method of investigating approximation guarantees may be of independent interest.

\textbf{Related work.}
There has been a recent surge of interest in analyzing tokenization.
\cite{zouhar2023formal} initiated a formal study of \textsc{BPE} using a bottom-up tokenization problem formulation that restricts token construction to sequential merges of two tokens from the existing vocubulary. 
\cite{kozma2024theoretical} proved that this bottom-up tokenization problem, and its more general variant,\footnote{Referred to as optimal merge sequence and optimal pair encoding respectively in their work.} is APX-complete using linear reduction \cite{papadimitriou1988optimization} from maximum cut \cite{karp1972reducibility} in cubic graphs. 
They also showed that \textsc{BPE} approximates a worst-case factor of between $0.333$ and $0.625$ for their general variant. 
In a recent concurrent work, \cite{whittington2024tokenisation} showed that both the bottom-up tokenization problem formulation and our partition cover formulation are NP-complete from the reduction of the maximum 2-satisfiability problem.
Beyond theory, empirical studies such as \cite{limisiewicz2023tokenization} and \cite{sun2023multi} have examined the practical downstream impact of tokenizer selection in NLP tasks.
With regard to tokenizer-free architectures \cite{tay2022charformer, yu2023megabyte, pagnoni2024byte}, our formulation draws the link to the fundamental binary decision problem of whether to merge adjacent characters, as one can view next-byte predictions as merging decisions.

Compared to prior and concurrent works, our NP-hardness proof (\cref{thm:tokenization-is-NP-hard}) is arguably simpler due to our formulation, and we contribute a novel tokenizer that is competitive in real-world scenarios.

\textbf{Outline of paper.}
We give our partition cover optimization formulation in \cref{sec:formulation}, we prove that it is NP-hard in \cref{sec:np-hardness}.
\textsc{GreedTok} is designed in \cref{sec:greedtok} and \cref{sec:experiments} contains our empirical evaluation against real world corpora.
Finally, we conclude with some discussions in \cref{sec:conclusion}.

\textbf{Notation.}
We use standard set notations such as $|\bA|$ to represent the size of set $\bA$, and standard asymptotic notations such as $O(\cdot)$.
Numbers are represented with small letters, strings/words with capital letters, and sets with bold letters.
Unordered sets are denoted by $\{\cdot\}$ and ordered tuples are denoted by $(\cdot)$.
We describe words in plaintext, e.g.,\ hello, or as a tuple of singletons, e.g.,\ (h,e,l,l,o).

\section{A general optimization formulation for the tokenization problem}
\label{sec:formulation}

Let $\bSigma$ be a fixed alphabet
, i.e., a basic character set.
This may be the 26 lowercase English letters or the full Unicode set, depending on the context.
A corpus $\cC = (\bW, \textsc{count})$ consists of a set of distinct words $\bW \subseteq \bSigma^+$, and a count function $\textsc{count}: \bW \to \N$ that indicates word frequencies.
Given a word $W \in \bSigma^+$ and a set of tokens $\bS \subseteq \bSigma^+$, let $\textsc{partition}(W, \bS)$ denote the minimum number of tokens from $\bS$ that can be concatenated in sequence to form $W$.
For example, the word $W = \text{abc}$ can be tokenized as ab\texttt{\char32}c but not as ac\texttt{\char32}b, despite both token sets covering the same characters.

The goal of minimizing the total number of tokens used to represent a corpus is often referred to as optimizing compression utility.

\begin{problem}[Tokenization search problem \textsc{Tok}]
\label{prob:tokenization-search}
Let $\bSigma$ be a fixed alphabet and define the base token set $\bB = \{ (w) : w \in \bSigma \}$ as all singleton characters.
Given a corpus $\cC = (\bW, \textsc{count})$, a token budget $k \in \N_{>0}$, and a set of candidate tokens $\bT \subseteq \bSigma^+$, the goal is to find a subset $\bS \subseteq \bT$ such that $|\bS| \leq k$ and $\sum_{W \in \bW} \textsc{partition}(W, \bS \cup \bB) \cdot \textsc{count}(W)$ is minimized.
Note that tokens $\mathbf{T}$ are candidate substrings drawn from $\mathbf{\Sigma}^*$ that may or may not correspond to actual words.
\end{problem}

As in standard practice, we consider the corresponding decision variant to establish NP-hardness.
Once the decision variant is shown to be NP-hard, the search problem inherits this hardness, since solving the search version yields a solution to the decision version by evaluating different thresholds (in our case, the value $\ell$).

\begin{problem}[Tokenization decision problem]
\label{prob:tokenization-decision}
With the same setup as \cref{prob:tokenization-search}, and given an additional integer threshold $\ell \in \N_{>0}$, determine whether there exists a subset $\bS \subseteq \bT$ such that $|\bS| \leq k$ and $\sum_{W \in \bW} \textsc{partition}(W, \bS \cup \bB) \cdot \textsc{count}(W) \leq \ell$.
\end{problem}

Our formulation differs subtly but significantly from prior tokenization models, such as those in \cite{zouhar2023formal, kozma2024theoretical}.
Rather than viewing tokenization through the lens of string compression algorithms, we reduce tokenization to the following question: \emph{Are two adjacent singleton symbols within a string covered by the same token?}
This perspective emphasizes that tokenization is fundamentally about selecting a compact vocabulary that fully covers the corpus without imposing algorithmic constraints such as bottom-up merge sequences, which are artifacts of specific approaches like \textsc{BPE}.
We further elaborate on the implications of our formulation in \cref{sec:MIP}.

\section{Tokenization is NP-hard}
\label{sec:np-hardness}

In this section, we prove that the tokenization decision problem (\cref{prob:tokenization-decision}) is NP-hard by a reduction from the vertex cover problem, which is known to be NP-hard \cite{karp1972reducibility}.
Given a graph $\cG = (\bV, \bE)$ with vertex set $\bV$ and edge set $\bE$, a vertex cover is a subset $\bS \subseteq \bV$ of vertices such that $|\bS \cap \{U, V\}| \geq 1$ for every edge $\{U, V\} \in \bE$.
The decision variant of the vertex cover problem asks whether there exists a subset $\bS$ of size at most $k$.

\begin{theorem}
\label{thm:tokenization-is-NP-hard}
The tokenization problem is NP-hard.
\end{theorem}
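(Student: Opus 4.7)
The plan is to reduce the NP-hard vertex cover decision problem to the tokenization decision problem (\cref{prob:tokenization-decision}) in polynomial time. Given a graph $\cG = (\bV, \bE)$ with $m = |\bE|$ and integer $k$, I will build a tokenization instance in which a solution with partition cost at most some threshold $\ell$ exists if and only if $\cG$ admits a vertex cover of size at most $k$.

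\textbf{Construction.} Introduce a fresh separator symbol $\#$ and set $\bSigma = \{\alpha_v : v \in \bV\} \cup \{\#\}$. For each edge $e = \{u, v\} \in \bE$ (with endpoints ordered arbitrarily) add the length-$5$ word $W_e = (\#, \alpha_u, \#, \alpha_v, \#)$ to the corpus with $\textsc{count}(W_e) = 1$. Let the candidate token set be $\bT = \{\tau_v : v \in \bV\}$ with $\tau_v = (\#, \alpha_v, \#)$. Use the same budget $k$ and set $\ell = 3m$.

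\textbf{Correctness.} For the forward direction, given a vertex cover $\bS_V$ of size at most $k$, pick $\bS = \{\tau_v : v \in \bS_V\}$; every edge $e = \{u, v\}$ has an endpoint in $\bS_V$, so $\tau_u \in \bS$ or $\tau_v \in \bS$ and $W_e$ admits a $3$-token partition such as $[\tau_u]\,[\alpha_v]\,[\#]$, giving total cost at most $3m$. The reverse direction hinges on one structural observation: within $W_e$ the only non-base substrings of $W_e$ belonging to $\bT$ are $\tau_u$ (covering positions $0$--$2$) and $\tau_v$ (covering positions $2$--$4$), and they collide at the middle $\#$. Thus at most one of them can appear in any partition, so the partition cost of $W_e$ is exactly $3$ when some $\tau_u$ or $\tau_v$ lies in $\bS$ and exactly $5$ otherwise. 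Writing the total as $3a + 5(m-a)$ with $a$ the number of covered words, the constraint $\leq 3m$ forces $a = m$; hence every edge has an endpoint $v$ with $\tau_v \in \bS$, and $\bS_V := \{v : \tau_v \in \bS\}$ is a vertex cover of size at most $|\bS| \leq k$.

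\textbf{Main obstacle and polynomial-time check.} The construction has size $O(n + m)$ and is built in polynomial time, so NP-hardness transfers from vertex cover \cite{karp1972reducibility}. The key obstacle is designing words and tokens so that selecting \emph{both} endpoint tokens of an edge yields no additional savings beyond selecting only one; without this cap, a token budget could be exhausted by ``double-covering'' some edges while leaving others uncovered, which would only give a weighted-max-cover style inequality rather than a faithful vertex-cover constraint. Centering length-$3$ tokens on each $\alpha_v$ with flanking $\#$'s and placing a $\#$ in the middle of each word is exactly what forces $\tau_u$ and $\tau_v$ to overlap at the central $\#$, capping the per-edge savings at $2$ and aligning the objective with vertex cover.
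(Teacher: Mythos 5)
Your proposal is correct and is essentially the same reduction as the paper's: the same edge-gadget words $(\#,\alpha_u,\#,\alpha_v,\#)$, the same candidate tokens $(\#,\alpha_v,\#)$, the same threshold $\ell = 3|\bE|$, and the same either-3-or-5 partition argument. Your explicit remark that the two endpoint tokens collide at the central separator (so double-covering yields no extra savings) is a nice articulation of what the paper leaves implicit in its ``Observation'' paragraph.
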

\begin{proof}
We will prove this by reducing the vertex cover problem, which is known to be NP-hard \cite{karp1972reducibility}, to the tokenization problem.
Given an arbitrary vertex cover problem instance, we show how to construct a corresponding tokenization instance.
Then, we argue that the derived tokenization problem instance is a YES instance if and only if the original vertex cover problem instance is a YES instance.
In this proof, for clarity, we will write words $W \in \bW$ as a tuple of singletons instead of usual plaintext, e.g.\ (h,e,l,l,o) instead of hello.

\textbf{Construction.}
Consider an arbitrary vertex cover problem given by the graph $\cG = (\bV, \bE)$ over $n$ vertices $\bV = \{V_1, \ldots, V_n\}$ and a positive integer $k \in \N_{\geq 0}$.
To construct an instance of the tokenization problem, we first define the alphabet as follows: $\bSigma = \{ V_1, \ldots, V_n, @ \}$ where $@$ is an additional symbol which we will use later.
So, we have $\bB = \{ (V_1), \ldots, (V_n), (@) \}$.
For each edge $\{V_i, V_j\} \in \bE$ with $i < j$, we create a word $W_{i,j} = (@, V_i, @, V_j, @)$ and define the set of words as $\bW = \{ W_{i,j} : \{V_i, V_j\} \in \bE \}$ where each word has count 1, i.e.\ $\textsc{count}(W) = 1$ for all $W \in \bW$.
Then, we define the set of candidate tokens $\bT = \{ (@, V_i, @): V_i \in \bV \}$.
Finally, we set $\ell = 3 |\bW| = 3 |\bE|$ and associate the parameter $k$ in the vertex cover problem instance to the corresponding parameter $k$ in the tokenization problem instance.
One can check that this derived tokenization instance can be constructed in polynomial time.

\textbf{Observation.}
Observe that every word $W \in \bW$ has length 5 and each token in $\bS$ has length 3, so $\textsc{partition}(W, \bS \cup \bB)$ will either be 3, when there is some token in $\bS$ that is a contiguous subword of $W$, or 5 otherwise.
For instance, given the word $W_{i,j} = (@, V_i, @, V_j, @)$, we have $\textsc{partition}(W_{i,j}, \bS \cup \bB) = 3$ if and only if at least one of $(@, V_i, @)$ or $(@, V_j, @)$ is chosen in $\bS$ (both could be in $\bS$).
Furthermore, since all words have count 1, the tokenization problem becomes finding $\bS \subseteq \bT$ such that $|\bS| \leq k$ and $\sum_{W \in \bW} \textsc{partition}(W, \bS \cup \bB) \leq \ell = 3 |\bW|$.

\textbf{YES instance of tokenization problem to YES instance of vertex cover.}
Suppose there exists a subset $\bS \subseteq \bT$ of tokens such that $|\bS| \leq k$ and $\sum_{W \in \bW} \textsc{partition}(W, \bS \cup \bB) \leq \ell = 3 |\bW|$.
Then, from the observation above, we know that this can only happen when $\textsc{partition}(W, \bS \cup \bB) = 3$ for every $W \in \bW$.
This implies that for each word $W_{i,j}$, at least one of $(@, V_i, @)$ or $(@, V_j, @)$ is chosen in $\bS$.
Therefore, $\bS_{\cG} = \{ V_i \in \bV : (@, V_i, @) \in \bS \}$ is a subset of size $|\bS_{\cG}| = |\bS| \leq k$ and corresponds to a vertex cover of the original graph $\cG$.

\textbf{YES instance of vertex cover to YES instance of tokenization problem.}
Suppose the original vertex cover instance for graph $\cG = (\bV, \bE)$ has a vertex cover $\bS_{\cG}$ of size $|\bS_{\cG}| \leq k$.
Then, let us define $\bS = \{ (@, V_i, @) \in \bSigma^+ : V_i \in \bS_{\cG} \}$ as the set of chosen tokens of size $|\bS| = |\bS_{\cG}| \leq k$.
Since $\bS_{\cG}$ is a set cover for $\cG$, by construction of $\bW$, we see that $\textsc{partition}(W, \bS \cup \bB) = 3$ for all $W \in \bW$.
Therefore, $\sum_{W \in \bW} \textsc{partition}(W, \bS \cup \bB) = 3 |\bW|$.
\end{proof}

\begin{example}
Consider a vertex cover instance on a graph $\cG = (\bV, \bE)$ with vertices $\bV = \{V_1, \ldots, V_5\}$ and edges $\bE = \{ \{V_1, V_2\}, \{V_1, V_4\}, \{V_1, V_5\}, \{V_2, V_3\}, \{V_2, V_4\}, \{V_3, V_5\} \}$ where the subset $\bS = \{V_1, V_3, V_4\}$ is a vertex cover of size $|\bS| = k = 3$.
\cref{fig:reduction-example} illustrates the corresponding tokenization problem instance created according to the construction in the proof of \cref{thm:tokenization-is-NP-hard}.
\end{example}

\begin{figure}[htb]
\centering
\resizebox{0.8\linewidth}{!}{
\begin{tikzpicture}
\node[draw, thick, minimum size=20pt, circle] at (0,0) (v1) {$V_1$};
\node[draw, thick, minimum size=20pt, circle] at (-1.25,-0.75) (v2) {$V_2$};
\node[draw, thick, minimum size=20pt, circle] at (-0.75,-2) (v3) {$V_3$};
\node[draw, thick, minimum size=20pt, circle] at (0.75,-2) (v4) {$V_4$};
\node[draw, thick, minimum size=20pt, circle] at (1.25,-0.75) (v5) {$V_5$};
\draw[thick] (v1) -- (v2);
\draw[thick] (v1) -- (v4);
\draw[thick] (v1) -- (v5);
\draw[thick] (v2) -- (v3);
\draw[thick] (v2) -- (v4);
\draw[thick] (v3) -- (v5);

\node[] at (8, -1) {\parbox{100pt}{
\begin{align*}
\bSigma = \{ & V_1, V_2, V_3, V_4, V_5, @ \}\\
\bB = \{ & (V_1), (V_2), (V_3), (V_4), (V_5), (@) \}\\
\bW = \{
& (\underline{@, V_1, @}, V_2, @), (\underline{@, V_1, @}, V_4, @), (\underline{@, V_1, @}, V_5, @),\\
& (@, V_2, \underline{@, V_3, @}), (@, V_2, \underline{@, V_4, @}), (\underline{@, V_3, @}, V_5, @) \}\\
\textsc{count}(W) = \phantom{\{} & 1, \quad\forall W \in \bW\\
\bT = \{& \underline{(@, V_1, @)}, (@, V_2, @), \underline{(@, V_3, @)}, \underline{(@, V_4, @)}, (@, V_5, @)\}\\
k = \phantom{\{} & k\\
\ell = \phantom{\{} & 3 |\bW| = 3 |\bE| = 18
\end{align*}}};
\end{tikzpicture}
}
\caption{An example tokenization problem instance construction according to the proof of \cref{thm:tokenization-is-NP-hard}. The tokens corresponding to the vertex cover $\bS = \{V_1, V_3, V_4\}$ are underlined in $\bT$.
A possible tokenization of $\bW$ using $\bS \cup \bB$ is also given with tokens in $\bS$ being underlined, showing that each word in $\bW$ only needs 3 tokens.}
\label{fig:reduction-example}
\end{figure}

\section{\textsc{GreedTok}: Our greedy tokenizer}
\label{sec:greedtok}

\textbf{Challenges in designing an efficient algorithm.}
In \cref{sec:np-hardness}, we showed that the tokenization problem (\textsc{Tok}) is NP-hard.
Developing efficient algorithms for NP-hard problems typically involves strategies that trade off between exactness, runtime, and solution quality.
Since our focus is on scalable, real-world applications, we aim for polynomial-time approximations and do not pursue fixed-parameter tractable algorithms.
Unfortunately, the common approximation strategies that are used to design efficient algorithms for NP-hard problems with provable approximation guarantee are not applicable here.
Firstly, while submodular functions admit efficient greedy $(1 - 1/e)$-approximations \cite{Nemhauser1978}, our objective is neither submodular nor supermodular (see \cref{sec:appendix-scaredy}).
Secondly, relax-and-round methods, like those used for vertex cover \cite{williamson2011design}, become impractical due to the sheer scale of real-world corpora which induces large numbers of variables and constraints.

\subsection{An equivalent mixed integer program formulation}
\label{sec:MIP}

To design our algorithm \textsc{GreedTok} for \textsc{Tok}, we begin by reformulating the problem in terms of a mixed linear program (MIP).
This serves two purposes.
First, the MIP provides a straightforward and intuitive framework that simplifies the definition and implementation of our greedy algorithm.
Second, it naturally relaxes to the well-known weighted maximum coverage problem (\textsc{WMC}), which is submodular and admits a greedy $(1 - 1/e)$-approximation algorithm \cite[Section 3.9]{hochbaum1996approximating}.
Although we cannot formally establish approximation guarantees for \textsc{GreedTok}, its connection to \textsc{WMC} enables empirical comparisons with \textsc{GreedWMC} in \textsc{Tok} instances; see \cref{sec:understanding-approximability}.

We define $\textsc{cover}(W, \bS)$ as the maximum number of adjacent singletons in word $W$ that can be grouped into tokens from $\bS$, with each character used at most once.
For example, with $W = \text{scaredy}$ and $\bS = \{\text{care}, \text{edy}\}$, we have $\textsc{cover}(W, \bS) = 3$ from concatenating 3 adjacent singleton pairs in ``care'', constrained on the position of `e' which can only be used once.
Meanwhile, $\textsc{partition}(W, \bS \cup \bB) = 4$ via s\texttt{\char32}care\texttt{\char32}d\texttt{\char32}y.
Notice that $|W| = \textsc{Partition} + \textsc{Cover}$.
This lets us rewrite the minimization objective from \cref{prob:tokenization-search} as an equivalent maximization objective:
\[
\min \sum_{W \in \bW} \textsc{count}(W) \cdot \textsc{partition}(W, \bS \cup \bB)
= \max \sum_{W \in \bW} \textsc{count}(W) \cdot \textsc{cover}(W, \bS).
\]
We refer to both forms as \textsc{Tok}.
Now, recall that $\bW$ represents the set of words in the corpus where each word $W = (W_1, \ldots, W_{|W|}) \in \bW$ has length $|W|$ and appears with frequency $\textsc{count}(W) \geq 1$.
Although our formulation permits any candidate token set $\bT$, identifying an optimal solution requires considering all substrings of length $\geq 2$ within $\bW$, i.e.\ there is a total number of $|\bT| \leq \sum_{W \in \bW} \left( \binom{|W|}{2} - |W| \right)$ such substrings, where $\binom{|W|}{2}$ counts all start-end pairs, and we subtract $|W|$ to exclude singletons.
In the following, we use the notation $A \subseteq B$ to denote that $A$ is a substring of $B$, e.g.\ $\texttt{for} \subseteq \texttt{force}$, and adopt a 1-based indexing in the MIP below.

To formulate \cref{prob:tokenization-search} as an MIP, our goal is to choose a subset $\bS \subseteq \bT$ of size $|\bS| \leq k$ such that the following objective is maximized, encoding $\max \sum_{W \in \bW} \textsc{count}(W) \cdot \textsc{cover}(W, \bS)$, where $c_W = \textsc{count}(W)$:
\begin{equation}
\label{eq:ILP-objective-main}
\max \sum_{W \in \bW} c_W \cdot \left( \sum_{i=1}^{|W|-1} m^{W}_{i,i+1} \right)
\end{equation}
with the binary variables $x_T \in \{0,1\}$ for all tokens $T \in \bT$ (\emph{Did we choose token $T \in \bT$, i.e. $T \in \bS$?}), $m^{W}_{1,2}, \ldots, m^{W}_{|W|-1, |W|}$ $\in \{0,1\}$, for all words $W \in \bW$ (\emph{Are the $i^{th}$ singleton $W_i$ and the $(i+1)^{th}$ singleton $W_{i+1}$ covered by the same token?}), and $m^{W,T}_{1,2}, \ldots, m^{W,T}_{|W|-1, |W|}$ $\in \{0,1\}$, for all words $W \in \bW$ and tokens $T \in \bT$ (\emph{Did token $T \in \bS$ cover the $i^{th}$ singleton $W_i$ and the $(i+1)^{th}$ singleton $W_{i+1}$?}), under the following constraints:\\
$\bullet \sum_{T}^\bT x_T \leq k$.\\
$\bullet\; x_T \geq m^{W,T}_{i,i+1}$ if $(W_i, W_{i+1}) \subseteq T$, \\
$\bullet \sum_{T}^\bT m^{W,T}_{i,i+1} \geq m^{W}_{i,i+1},$ \\
$\bullet \sum_{T}^\bT m^{W,T}_{i,i+1} \leq 1,$\hfill$\forall W \in \bW, \forall T \in \bT, \forall i \in \{1, \ldots, |W|-1\}$.\\
$\bullet\; m^{W,T}_{i,i+1} = m^{W,T}_{i+1,i+2}$ if $(W_i, W_{i+1}, W_{i+2}) \subseteq T,$\hfill$\forall W \in \bW, \forall T \in \bT, \forall i \in \{1, \ldots, |W|-2\}$.\\
$\bullet \sum_{T}^\bT m^{W,T}_{s-1,s} \leq 1 - m^{W,T}_{s,s+1}$, if $(W_s, W_{s+1})$ starts $T,$\hfill$\forall W \in \bW, \forall T \in \bT, \forall s \in \{2, \ldots, |W|-1\}.$ \\
$\bullet \sum_{T}^\bT m^{W,T}_{e,e+1} \leq 1 - m^{W,T}_{e-1,e}$, if $(W_{e-1}, W_{e})$ ends $T,$\hfill$\forall W \in \bW, \forall T \in \bT, \forall e \in \{2, \ldots, |W|-1\}.$ \\

For a more thorough explanation of our MIP formulation with examples, please refer to \cref{sec:appendix-MIP}.

\subsection{Relation to weighted maximum coverage}
\label{sec:mwc}

Like the vertex cover problem, the weighted maximum coverage problem (\textsc{WMC}) is NP-hard 
\cite{karp1972reducibility,williamson2011design,hochbaum1996approximating}.
Given a set of elements $\bL = \{L_1, \dots, L_{|\bL|}\}$ with weights $\cW = \{w_1, \dots, w_{|\bL|}\}$, a collection of subsets $\bU=\{U_1, \dots, U_{|\bU|}\}$ where each $U_i \subseteq \bL$, and an integer budget $k$, the goal is to select $\bU' \subseteq \bU$, such that $|\bU'| \leq k$, to maximize the total weights of covered elements $\sum_{L_i \in \bigcup \bU'} w_i$.
With some effort, one can show that \textsc{WMC} admits a mixed integer program with the same objective as \cref{eq:ILP-objective-main} but with fewer constraints.
Details are provided in \cref{sec:appendix-mwc}.

\textbf{Implication.}
Since \textsc{WMC} shares the same objective as \textsc{Tok} but under weaker constraints, its optimal value is at least that of \textsc{Tok}.
As \textsc{WMC} admits a $(1 - 1/e)$-approximate greedy algorithm (\textsc{GreedWMC}), this guarantee also applies to its performance in \textsc{Tok} instances, although the solution from \textsc{GreedWMC} may violate tokenization constraints.
Nevertheless, if \textsc{GreedTok} achieves objective values comparable to \textsc{GreedWMC}, it suggests that \textsc{GreedTok} may offer a similar approximation ratio for \textsc{Tok} despite lacking formal guarantees.

\subsection{A polynomial-time greedy algorithm}
\label{sec:algo-desc}

We now informally describe our algorithm, \textsc{GreedTok}, which consists of two main steps: (1) selecting a token set $\bS$ from candidate substrings $\bT$, and (2) tokenizing words $\bW$ using $\bS$; see \cref{sec:appendix-pseudocode} for pseudocode and examples.

We begin by constructing the candidate token set $\bT$, considering all substrings of length $\geq 2$ within the words $\bW$ in the corpus.
Then, for any $\bS \subseteq \bT$, let $f(\bS)$ be the objective value in our MIP formulation (see \cref{sec:MIP}).
Starting with $\bS = \emptyset$, we iteratively add tokens to $\bS$ by selecting $\tau = \argmax_{T \in \bT \setminus \bS} f(\bS \cup \{T\}) - f(\bS)$, subject to MIP constraints, to $\bS$ until $|\bS| = k$.
This process induces a natural ordering within the tokens in $\bS$.

To tokenize a word $W \in \bW$ using $\bS$, we scan its singletons to identify possible matches to the tokens in $\bS$ and sort these matches by the order in which the tokens were added to $\bS$.
We then iterate through these candidate covers and, if the cover satisfies the MIP constraints, mark the corresponding positions in a bitmask $m^W$ to cover the substring with the selected token.

A direct implementation yields a runtime of $O(|\bT| \cdot k \cdot \sum_{W \in \bW} |W|)$ when selecting $\bS$ and $O(|W|^2 \cdot \log |W|)$ when tokenizing a word $W$.
Note that this token ordering arises from the greedy nature of \textsc{GreedTok} but is not required for solving \textsc{Tok}, just as merge sequences are not fundamental to tokenization.
Despite the higher asymptotic costs than \textsc{BPE}, we show in \cref{sec:comparison-of-greedtok-bpe} that with implementation optimizations, \textsc{GreedTok} is practical for real-world NLP use.

\textbf{Comparing to \textsc{BPE}.}
\textsc{GreedTok}’s token order resembles the merge sequence in \textsc{BPE}, as both select one token per iteration.
However, \textsc{GreedTok} operates without the constraints of pairwise merges, allowing more flexible token selection.
\textsc{BPE} beats \textsc{GreedTok} in terms of computational complexity, with a selection cost of $O(k \cdot \sum_{W \in \bW} |W|)$ and per-word tokenization cost of $O(|W|^2)$ when using the pairwise caching approach \cite{tiktoken}.
However, this selection cost is a one-off cost that does not affect downstream applications. Additionally, we empirically show that the modest overhead of $O(\log |W|)$ in tokenization is worth the improvements in downstream tasks; see \cref{sec:experiments}.

\textbf{Comparing to \textsc{Unigram}.}
\textsc{Unigram}'s likelihood objective $\cL$ can be interpreted as a negative log-weighted version of \textsc{Tok}; see derivation in \cref{sec:relation-to-unigram} and example where optimizing $\cL$ may yield unfavorable behavior.
While both \textsc{GreedTok} and \textsc{Unigram} freely select tokens from $\bT$, \textsc{Unigram} prunes $\bT$ to size $|\bS| = k$, while \textsc{GreedTok} builds $\bS$ up from $\emptyset$.
Computational complexity wise, \textsc{Unigram}'s selection cost of $O(|\bT| \cdot \log k \cdot \sum_{W \in \bW} |W|)$ beats \textsc{GreedTok}'s but its per-word tokenization cost of $O(k \cdot |W|)$ exceeds \textsc{GreedTok}'s when $k \gg |W| \log |W|$.
With large $k$ being common in practical real-world use cases, this is one reason why \textsc{Unigram} is often not used in production systems despite being known to produce higher-quality tokens \cite{bostrom2020byte, schmidt2024tokenization}.

\section{Empirical evaluation of \textsc{GreedTok} on real-world datasets}
\label{sec:experiments}

Our implementation of \textsc{GreedTok} is on \texttt{C++} and accessible using \texttt{Python} bindings or through \texttt{HuggingFace}'s API via a simple import line, enabling easy integration onto existing codebases.

\subsection{Evaluating \textsc{GreedTok}'s compression}
\label{sec:comparison-of-greedtok-bpe}

\begin{wraptable}{R}{0.5\textwidth}
    \centering 
    \caption{Dataset statistics and the time taken for compute with word counts as inputs, conducted with AMD EPYC 9654 @ 2.40GHz. Refer to \cref{sec:appendix-corpus} for additional dataset descriptions.}
    \label{tab:dataset_statistics}
\resizebox{\linewidth}{!}{%
    \begin{tabular}{c|rrrcr} \toprule
  Dataset & \multicolumn{1}{c}{$|\bW|$} & \multicolumn{1}{c}{$\sum^\bW_{W} c_W$} & \multicolumn{1}{c}{$|\bT|$} & \multicolumn{1}{c}{$\max |\bS|$} & \multicolumn{1}{c}{Time} \\
  \midrule
  \texttt{UN} & 105K & 37M & 884K &   5K & 6s\\
  \texttt{ar$\chi$iv} &  881K & 366M & 7,626K & 5K& 63s\\
  \texttt{wiki} & 8,769K & 2,949M & 93.5M & 10K& 11m\\
  \texttt{PubMed} & 6,527K & 4,149M & 121M& 10K& 11m\\
  \bottomrule
    \end{tabular}%
}
\end{wraptable}

We compared \textsc{GreedTok} against \textsc{BPE} and \textsc{Unigram} by measuring compression performance across four real-world corpora at varying token budget levels $|\bS| = k$; see \cref{tab:dataset_statistics}.
We define the singleton set $\bB$ as all 256 byte values, $\bW$ as the set of space-delimited UTF-8 strings (byte sequences) extracted from each corpus, and introduce a special token to mark the start of a string following a space character.
The function \textsc{count} maps each $W \in \bW$ to its frequency in the corpus in UTF-8 format.
The candidate token set $\bT$ for \textsc{GreedTok} includes all substrings of words in $\bW$ while \textsc{Unigram}'s is at the character level.
In contrast, \textsc{BPE} begins with an empty $\bT$ and builds tokens incrementally from $\bB$, depending on the final $|\bS| = k$.
Thus, both \textsc{GreedTok} and \textsc{BPE} produce final token sets of size $|\bB| + k$.
However, in addition, we allow \textsc{Unigram} to include frequent (possibly multibyte) characters in the final vocabulary due to the \texttt{sentencepiece} implementation, i.e.\ it has a final token set size larger than $|\bB| + k$.

\textbf{Discussion.}
We see from \cref{tab:tokens_statistics} that \textsc{GreedTok} consistently uses fewer tokens on average to represent the same data.
Since \textsc{BPE} relies on repeated applications of merge rules to build large tokens, this suggests that many intermediate tokens created during merging may never be used in the final encoding, effectively wasting vocabulary capacity that could be allocated to more useful tokens.
Meanwhile, we know from our example in \cref{sec:relation-to-unigram} that \textsc{Unigram} can over-prioritize whole words at the expense of informative subword tokens, and the empirical results of \cref{tab:tokens_statistics} confirms our suspicion that such suboptimal scenarios are not rare.

\begin{table}[t]
\centering
\caption{
This table reports the compression performance of \textsc{GreedTok}/\textsc{BPE}/\textsc{Unigram} algorithms.
For \textsc{Unigram}, we increase the input $k$ by 75/84/108/94 respectively to account for the compulsory character inclusion into $\bT$. \textsc{GreedTok (GTK)} outperforms \textsc{BPE} and \textsc{Unigram} in larger corpora (\texttt{arXiv}, \texttt{PubMed},  \texttt{wiki}), with mean improvement of 2.88\% over \textsc{BPE} and 3.43\% over \textsc{Unigram}.}
\label{tab:tokens_statistics}
\resizebox{\linewidth}{!}{%
\begin{tabular}{l|lrrrrr|lrrrrr}\toprule
 & $k$ & 1000 & 2000 & 3000 & 4000 & 5000 &
 & 2000 & 4000 & 6000 & 8000 & 10000\\ \midrule

\textsc{GTK} Tokens/Word & \multirow{5}{*}{\rotatebox{90}{\texttt{UN}}} & 1.607	& 1.374	& 1.268	& 1.205	& 1.163 
&\multirow{6}{*}{\rotatebox{90}{\texttt{PubMed}}}& 1.603	& 1.397	& 1.301	& 1.244	& 1.206 \\\cmidrule(l){3-7}\cmidrule(lr){9-13}

\textsc{BPE} Tokens/Word &&1.688 & 1.431 & 1.311 & 1.241 & 1.194 &
& 1.650	& 1.431	& 1.328	& 1.266	& 1.225\\

\textsc{GTK}'s Improvement (\%)&& \textbf{4.86} & \textbf{3.99} & \textbf{3.33} & \textbf{2.92} & \textbf{2.54} &
& \textbf{2.85} & \textbf{2.38} & \textbf{2.02} & \textbf{1.75} & \textbf{1.52} \\\cmidrule(l){3-7}\cmidrule(lr){9-13}

\textsc{Unigram} Tokens/Word & & 1.655 & 1.385 & 1.261 & 1.193 & 1.148&
& 1.699	& 1.465	& 1.359	& 1.297	& 1.257 \\

\textsc{GTK}'s Improvement (\%) & & \textbf{2.90} & \textbf{0.78} & -0.51 & -0.97 & -1.30 &
& \textbf{5.63} & \textbf{4.63} & \textbf{4.21} & \textbf{4.05} & \textbf{4.02}\\ \midrule

\textsc{GTK} Tokens/Word & \multirow{5}{*}{\rotatebox{90}{\texttt{ar$\chi$iv}}} & 1.742 & 1.475 & 1.349 & 1.275 & 1.226& \multirow{6}{*}{\rotatebox{90}{\texttt{wiki}}} & 1.692 & 1.489 & 1.389 & 1.326 & 1.283 \\\cmidrule(l){3-7}\cmidrule(lr){9-13}

\textsc{BPE} Tokens/Word && 1.837 & 1.551 & 1.407 & 1.320 & 1.263 &
& 1.731	& 1.519	& 1.413	& 1.347	& 1.301\\

\textsc{GTK}'s Improvement (\%) && \textbf{5.12} & \textbf{4.94} & \textbf{4.15} & \textbf{3.41} & \textbf{2.89}&
& \textbf{2.26} & \textbf{1.98} & \textbf{1.71} & \textbf{1.53} & \textbf{1.37} \\\cmidrule(l){3-7}\cmidrule(lr){9-13}

\textsc{Unigram} Tokens/Word&& 1.793	& 1.558	& 1.444	& 1.378	& 1.332 &
& 1.793 & 1.558 & 1.444 & 1.378 & 1.332 \\

\textsc{GTK}'s Improvement (\%) && \textbf{6.74} & \textbf{4.92} & \textbf{4.30} & \textbf{3.96} & \textbf{3.88}&
& \textbf{5.62} & \textbf{4.43} & \textbf{3.84} & \textbf{3.75} & \textbf{3.70} \\\bottomrule
\end{tabular}%
}

\end{table}

\subsection{Evaluating \textsc{GreedTok}'s language pre-training}
\label{sec:language-modeling}

In \cref{sec:analyzing_characteristics}, we show empirical evidence that the token sets produced by \textsc{GreedTok} more closely resemble \textsc{Unigram} than \textsc{BPE}, suggesting that they may inherit some of \textsc{Unigram}’s favorable token characteristics. 
To test this hypothesis, we pretrain two 1B-parameter language models (details in \cref{sec:appendix-pretraining}), differing only in tokenizer choice --- \textsc{GreedTok} versus \textsc{BPE}.\footnote{We use \textsc{BPE} as the baseline, given its status as the most widely adopted tokenizer for LLMs \cite{kaddour2023challenges}.}
Both models use a vocabulary size of 65,536 and are trained on approximately 20\% of the \texttt{DCLM Dedup} dataset \cite{zyphra_nvidia_2024, li2024datacomp}, with their final token sets being 75\% similar.

\begin{table}[ht!]
\centering
\caption{The token count statistics for all three settings. \textsc{GreedTok} uses nearly 18\% fewer tokens to represent the entire \texttt{DCLM Dedup} dataset.
The total training tokens used is around 629B tokens. 
}
\label{tab:1b_token_count}
\begin{tabular}{llrrr}
\toprule
Experiment Name & Tokenizer & Full dataset tokens  & Training tokens    & Dataset \% \\
\midrule
\textsc{BPEM} & \textsc{BPE} & $8.94\cdot10^{11}$ & $6.29\cdot10^{11}$ & 70.35\% \\
Equal Tokens (\textsc{GTET}) & \textsc{GreedTok} & $7.35\cdot10^{11}$ & $6.29\cdot10^{11}$ & 85.58\%     \\
Equal Proportion (\textsc{GTEP}) & \textsc{GreedTok}  & $7.35\cdot10^{11}$ & $5.03\cdot10^{11}$ & 68.47\%     \\
\bottomrule
\end{tabular}
\end{table}

We compare the \textsc{BPE}-based model (\textsc{BPEM}) against two versions (\textsc{GTET} and \textsc{GTEP}) of the \textsc{GreedTok}-based model under different training constraints, summarized in \cref{tab:1b_token_count}:
\begin{enumerate}
    \item \textbf{\textsc{GreedTok} Equal Tokens (\textsc{GTET}).}
Trained using the same number of tokens as \textsc{BPEM}.
This setting isolates the impact of denser token representations by fixing the token count.
    \item \textbf{\textsc{GreedTok} Equal Proportion (\textsc{GTEP}).}
Trained using the same proportion of the original dataset as \textsc{BPEM}.
Here, the number of training tokens differs, based on each tokenizer’s compression ratio, allowing us to examine the effect of using fewer tokens from equivalent text coverage.
\end{enumerate}

\textbf{Evaluation.} 
We use the popular \texttt{Language Model Evaluation Harness} \cite{eval-harness} toolkit and their predefined evaluation settings to evaluate \textsc{BPEM}, \textsc{GTET}, and \textsc{GTEP}. 
Several popular benchmark sets were used for evaluation, refer to \cref{sec:appendix-benchmarks} for more information.

\begin{figure}[htb]
  \centering
  \begin{subfigure}[b]{0.48\textwidth}
        \centering
        \includegraphics[width=\linewidth]{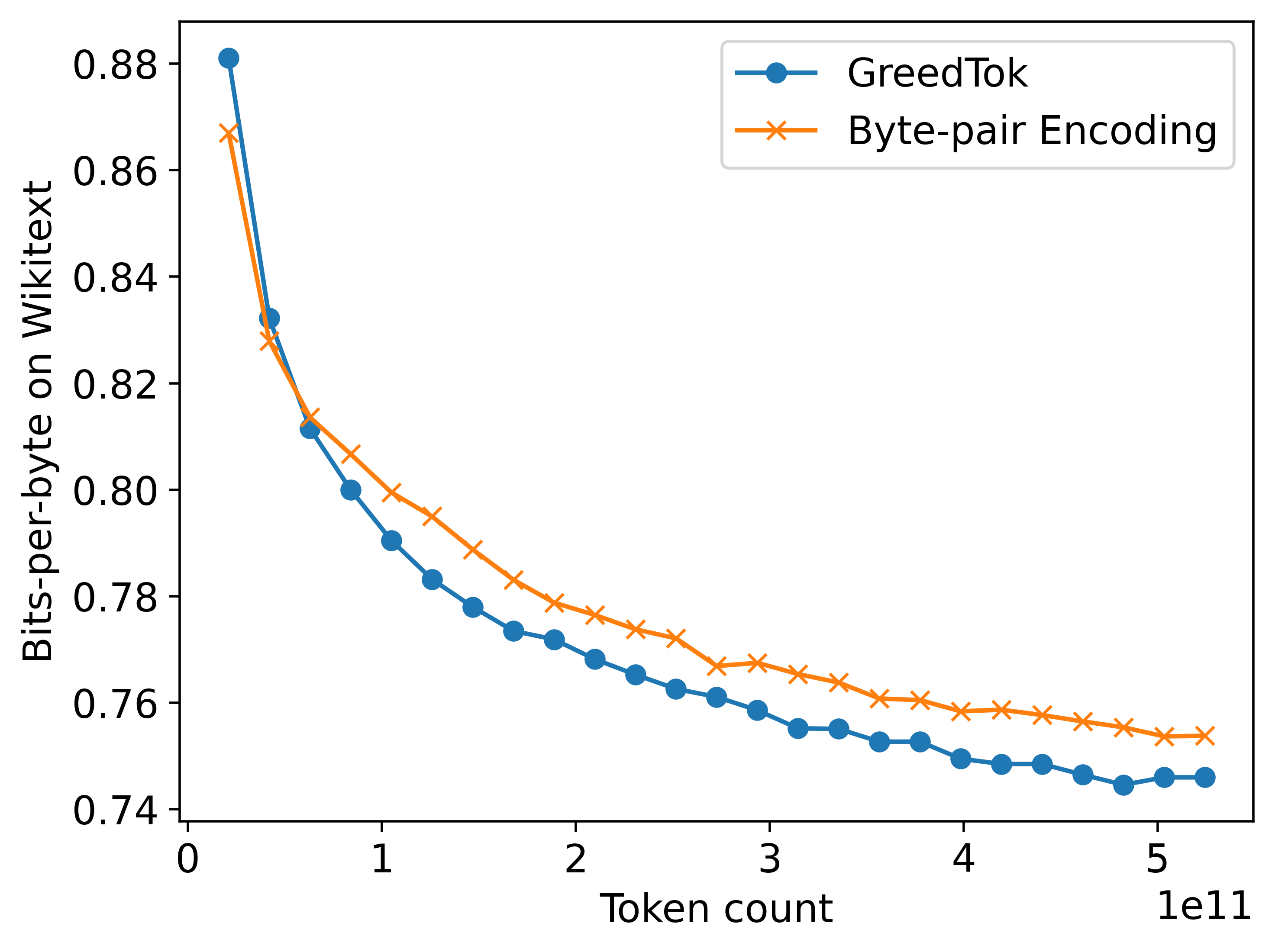}
        \caption{Comparing along number of tokens trained.}
    \label{fig:pretraining_graphs_a}
    \end{subfigure}
\;\;
  \begin{subfigure}[b]{0.48\textwidth}
        \centering
        \includegraphics[width=\linewidth]{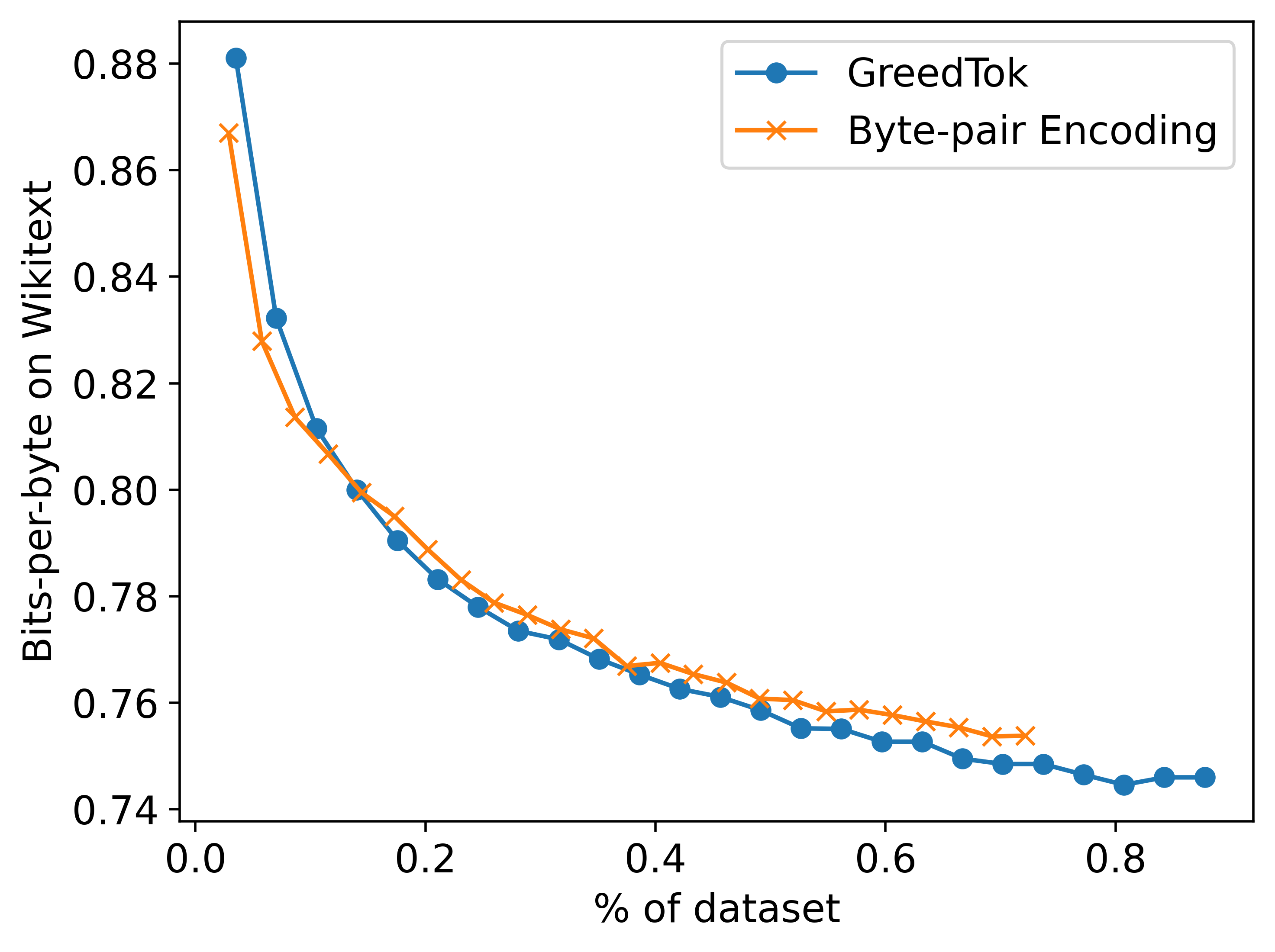}
        \caption{Comparing along amount of text trained.}
    \label{fig:pretraining_graphs_b}
  \end{subfigure}
  
  \caption{ We plot the bits/byte improvement across phase 1 training for model using \textsc{GreedTok} and \textsc{BPE} on different scales.
  The bits/byte metric is independent of tokenization and reflects true compression performance on the underlying data.
  Since both GTET and GTEP are equivalent in phase 1 for the first 100,000 steps, we examine bits/byte improvement on Wikitext with different scales on the x-axes. 
}

  \label{fig:pretraining_graphs}
\end{figure}

\begin{table}[htb]
\caption{Evaluation results on popular benchmarks. \textsc{GTET}/\textsc{GTEP} obtained better scores than \textsc{BPEM}.}
\label{tab:pretraining_results}
\resizebox{\linewidth}{!}{%
\begin{tabular}{lrrrrrrrrrrrr|r}
\toprule
& \multicolumn{6}{c}{\textit{Accuracy (normalized)}}  & \multicolumn{5}{c}{\textit{Accuracy}}  &  & \multicolumn{1}{l}{\textit{bits/byte}} \\
\cmidrule(lr){2-7} \cmidrule(lr){8-12} \cmidrule(l){14-14}
&  &  & \multicolumn{1}{l}{Hella-} &  &  &  &  &  & \multicolumn{1}{l}{LAMB-} &  & \multicolumn{1}{l}{Wino-} &  &  \\

& \multicolumn{1}{l}{ARC-c} & \multicolumn{1}{l}{ARC-e} & \multicolumn{1}{l}{Swag} & \multicolumn{1}{l}{OBQA} & \multicolumn{1}{l}{PIQA} & \multicolumn{1}{l}{SciQ} & \multicolumn{1}{l}{BoolQ} & \multicolumn{1}{l}{COPA} & \multicolumn{1}{l}{BADA} & \multicolumn{1}{l}{RACE} & \multicolumn{1}{l}{grande} & \multicolumn{1}{l}{\textit{Avg.}} & \multicolumn{1}{|l}{Wikitext} \\
\cmidrule(r){1-1} \cmidrule(lr){2-7} \cmidrule(lr){8-12} \cmidrule(lr){13-13} \cmidrule(l){14-14} 
BPEM & 36.2 & 67.9 & 65.6 & 40.0 & 75.7 & 89.8 & 65.8 & 81.0 & 61.1   & 36.4 & 62.8 & 62.0 & 0.7066 \\
GTEP & 37.6 & 68.8 & 64.9 & 39.6 & 75.6 & 90.0 & 67.6 & 79.0 & 63.9   & 36.8 & \textbf{63.5} & 62.5 & 0.7028 \\
GTET & \textbf{38.3} & \textbf{70.0} & \textbf{65.7} & \textbf{40.6} & \textbf{75.8} & \textbf{90.5} & \textbf{67.7} & \textbf{82.0} & \textbf{64.6}   & \textbf{37.7} & 62.6 & \textbf{63.2} & \textbf{0.6989} \\ 
\bottomrule
\end{tabular}
}
\end{table}

\textbf{Discussion.}
Previous works report either similar or a decrease in performance stemming from better compression \cite{galle-2019-investigating, goldman2024unpacking, schmidt2024tokenization, ali2024tokenizerchoicellmtraining}.
One plausible explanation for these results is that, given any sentence, a tokenizer with a lower compression rate uses more tokens, which results in a higher number of total activations in a transformer during inference.
This increase in the effective \emph{width} of the transformer's computation circuit can increase its expressive power \cite{pfau2024letsthinkdotdot}, resulting in better performance for poorer compression.
However, the results in \cref{tab:pretraining_results} show that \textsc{GreedTok}, compared to its \textsc{BPE} counterpart, has achieved better compression while still maintaining model performance.
This suggests an example of meaningful compression, with both \textsc{GTET} and \textsc{GTEP} outperforming \textsc{BPEM} on the evaluated benchmarks.
From \cref{fig:pretraining_graphs_a}, when training on equal token count, \textsc{GreedTok} is ahead.
While \cref{fig:pretraining_graphs_b} shows that, when normalized and trained on equivalent byte-lengths of data, \textsc{GreedTok} performs comparably to \textsc{BPE} highlighting the competitive modeling capacity of \textsc{GreedTok}, despite structural differences in tokenization.
Our results suggest that the higher compression rate of \textsc{GreedTok} does not negatively affect downstream performance.
Furthermore, it is even possible to achieve the same results with \textsc{GreedTok} while using fewer tokens for training.

\subsection{Towards understanding \textsc{GreedTok}'s approximability}
\label{sec:understanding-approximability}

\begin{figure}[ht]
  \centering
  \includegraphics[width=\linewidth]{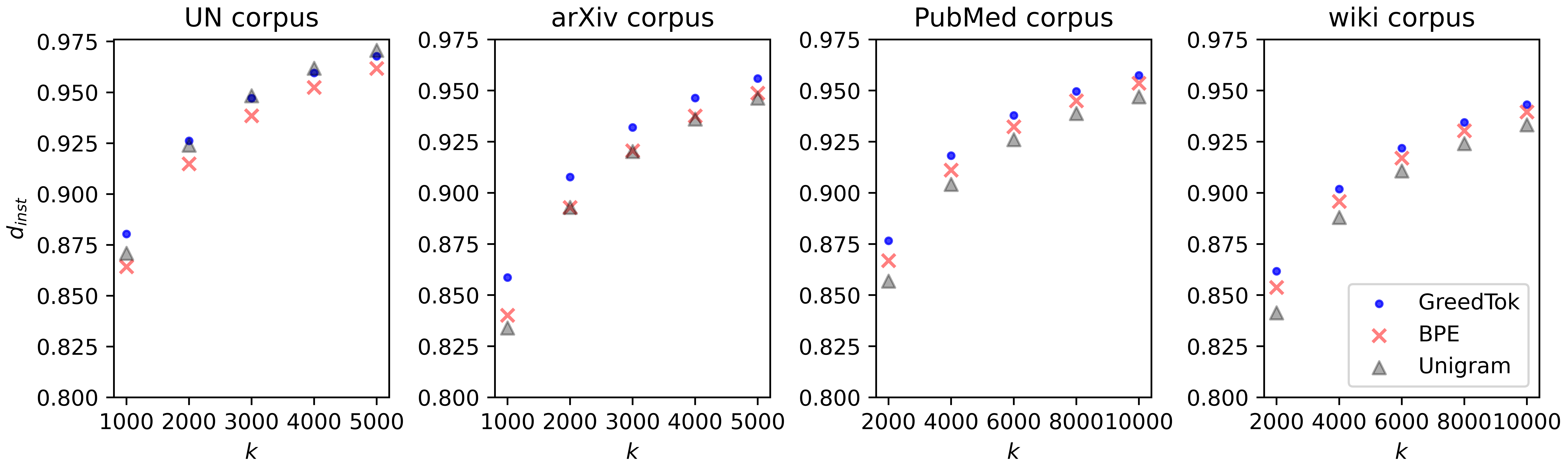}
  \caption{Plots showing exact $d_{\text{inst}}$ of each problem instance at different $|\bS| = k$.
  As $k$ increases, the ratio of objectives $d_{\text{inst}}$ between \textsc{GreedTok}/\textsc{BPE}/\textsc{Unigram} and \textsc{GreedWMC} closes to 1.
  }
  \label{fig:bound_plots}
\end{figure}

We reformulated \cref{prob:tokenization-search} into a MIP in \cref{sec:MIP} as it relaxes naturally into the maximum coverage problem, which has a corresponding $(1 - 1/e)$ approximate algorithm \textsc{GreedWMC}.
Using \textsc{GreedWMC} on the same problem instances of similar $k$, we can calculate the ratio of objectives between \textsc{GreedTok} and \textsc{GreedWMC} and define $d_{\text{inst}} = \frac{\text{\textsc{GreedTok}}}{\text{\textsc{GreedWMC}}}$ for each instance.
Therefore, \textsc{GreedTok} attains an objective value \emph{at least} $d_{\text{inst}} (1 - 1/e)$ times the optimal objective of \cref{eq:ILP-objective} by definition,
with \textsc{GreedWMC}'s objective value is \emph{at least} that of \textsc{Tok}; see \cref{sec:mwc}.

\begin{wrapfigure}{R}{0.5\textwidth}
  \centering
  \includegraphics[width=\linewidth]{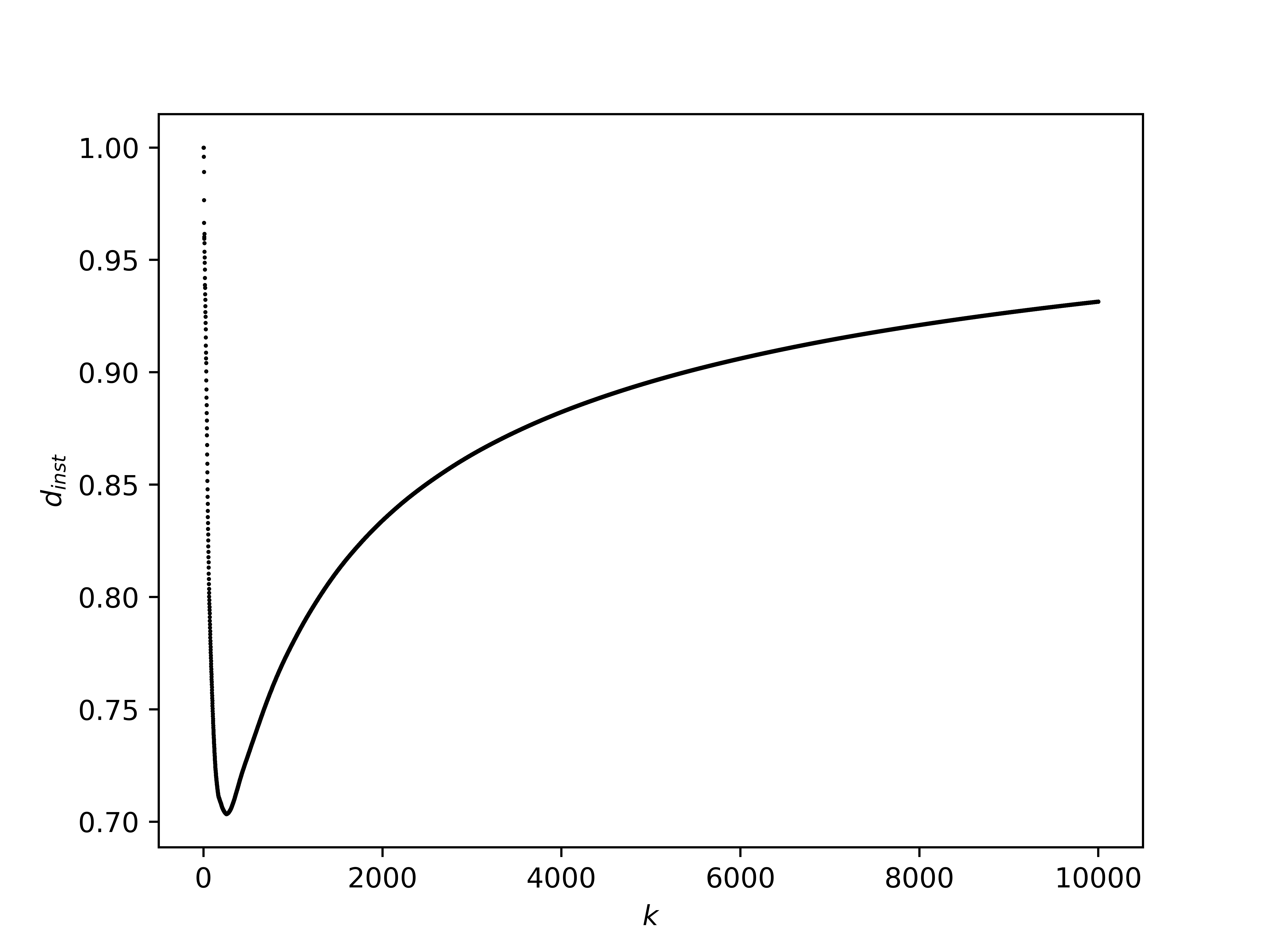}
  \caption{We sampled documents from \texttt{RefinedWeb} corpus at a probability of 0.01 across 40 independent runs, then run \textsc{GreedTok}. Plotting mean $d_\text{inst}$ shows it trending towards 1, empirically showing \textsc{GreedTok} is a $0.9(1 - 1/e)$-approximate algorithm.}
  \label{fig:bound_plots_sampled}
\end{wrapfigure}

\textbf{Discussion.}
From \cref{fig:bound_plots}, for the four selected corpora, we plot $d_{\text{inst}}$ against $k$, observing that as $k$ increases, $d_{\text{inst}}$ climbs towards 1.
In addition, to ensure that the results are generalizable to the wider internet corpus, we evaluate on \texttt{RefinedWeb} corpus, and obtain similar findings; see \cref{fig:bound_plots_sampled}.
Empirically, \textsc{GreedTok} achieves an objective value of at least $0.9 (1 - 1/e)$ of the optimal for large $k$, relevant for practical NLP scenarios.
With $d_{\text{inst}} \rightarrow 1$, this implies that the room for possible compression improvements narrows.

\section{Conclusion}
\label{sec:conclusion}

In this work, we showed that the tokenization problem is NP-hard and provided a greedy algorithm \textsc{GreedTok} that is a practical alternative over incumbents \textsc{BPE} and \textsc{Unigram}, and may even be a better option for language pre-training. Although recent works \cite{hooper2024kvquant, team2024gemini} had pushed the limits of context length, plausibly reducing the importance of compression, \textsc{GreedTok} can still offer a flexible platform to explore new alternate objectives, such as integrating NLP downstream objectives \cite{bostrom2020byte} and fairness \cite{limisiewicz2024myte} constraints into its MIP formulation. 
Finally, recall that the tokenization problem has the confounding property of being neither supermodular nor submodular.
Although we show that \textsc{GreedTok} achieves an approximation ratio of at least $0.9 (1 - 1/e)$ for large $k$, a formal proof is lacking.
Nevertheless, this is an intriguing theoretical problem.
We hope that our formulation of the tokenization problem and the accompanying toolkit will be valuable for future research. 

\textbf{Computational feasibility.}
While the theoretical runtime of \textsc{GreedTok} for selecting $\bS$ is $O(|\bT| \cdot k \cdot \sum_{W \in \bW} |W|)$, a key optimization is to update a token $T$’s marginal contribution only when it is being evaluated for inclusion. 
Empirically, we observe that this lazy evaluation strategy scales like $\Theta(|\bT| \cdot \sum_{W \in \bW} |W|)$, making \textsc{GreedTok} practical for large-scale NLP workloads.
\cref{tab:dataset_statistics} summarizes the time for \textsc{GreedTok} to compute $\bS$ at the largest tested size $|\bS| = \max k$ (see \cref{sec:comparison-of-greedtok-bpe}).
In a larger experiment, with $|\bW| = 14.3$M and $|\bT| = 251$M, \textsc{GreedTok} computed $\bS$ in 34 minutes using 160GB of RAM.
This cost can be reduced by reducing the search space, limiting $\max |W|$, filtering $\bW$, or pruning $\bT$ by substring length or frequency.
To benchmark encoding speed, we tokenize a subset of the \texttt{wiki} corpus (70K articles, 97M words) using a vocabulary of $|\bS| = 100$K (from \texttt{cl100k\_base} in \textsc{tiktoken} \cite{tiktoken}).
Our current implementation of \textsc{GreedTok} achieves 700K–800K words per second per thread, and we expect further optimization is possible.
These results demonstrate that \textsc{GreedTok} is feasible for integration into modern NLP pipelines.

\textbf{Future extensions.} There are many tokenization techniques that augment an initial token set produced from core tokenization algorithms like \textsc{BPE} and \textsc{Unigram}. 
Likewise, these methods could also be used to augment the token sets produced from \textsc{GreedTok}. 
For example, \textsc{BPE-Dropout} \cite{provilkov2020bpe} introduces stochasticity by randomly dropping merge operations during training, yielding multiple possible segmentations per input. 
While GreedTok is deterministic by default, it can be adapted in a similar fashion: at each step, we can randomly skip adding the top token and proceed with updating the graph accordingly.
\textsc{PathPiece} \cite{schmidt2024tokenization}, an encoding algorithm, can be applied directly to any token set, including the ones generated by GreedTok.
\textsc{PickyBPE} \cite{chizhov2024bpe} refines \textsc{BPE} vocabulary by iteratively removing low-utility tokens, using a deletion criterion guided by a hyperparameter. 
For encoding, it relies on a naive greedy approach or \textsc{PathPiece}. 
\textsc{GreedTok} can likewise incorporate such retrospective pruning: after each token addition, evaluate and remove earlier redundant tokens to improve vocabulary efficiency.
\textsc{BoundlessBPE} \cite{schmidt2025boundlessbytepairencoding} and \textsc{SuperBPE} \cite{liu2025superbpespacetravellanguage} are contemporaneous methods that allow token merging across whitespace boundaries by generating longer tokens from an initial \textsc{BPE} token set resulting in larger token sets. 
\textsc{GreedTok} tokens could be used as an initial set from which these whitespace-spanning merges are constructed.
\textsc{VOLT} \cite{xu2021vocabulary} prunes an initial token set, generated via Unigram or BPE, by seeking to maximize the entropy of subword distributions. 
Again, \textsc{GreedTok} could serve as an upstream tokenizer to generate the initial candidate tokens for \textsc{VOLT}.

\textbf{Limitations.} The purpose of this work is to offer a new perspective on tokenization, with empirical experiments to show that this theory is practical. Pretraining language models from scratch is expensive, hence, our comparisons are limited to \textsc{BPE}, since it is widely adopted in current practice, and we fixed our models to have a size of 1B parameters. Although we believe that our insights and observed trends should generalize to larger models, more empirical confirmation at scale is needed. Our experiments mainly use corpora that contain commonly used languages, and did not conduct evaluations on low-resource languages, which is an important area for further exploration.

\section*{Acknowledgements}
This research/project is supported by the National Research Foundation, Singapore under its AI Singapore Programme (AISG Award No: AISG3-PhD-2023-08-055T).

\bibliography{refs}
\bibliographystyle{alpha}


\newpage
\section*{NeurIPS Paper Checklist}

\begin{enumerate}

\item {\bf Claims}
    \item[] Question: Do the main claims made in the abstract and introduction accurately reflect the paper's contributions and scope?
    \item[] Answer: \answerYes{} 
    \item[] Justification: Claims matches theoretical and experimental results. When possible, we try to generalize on larger experiments.
    \item[] Guidelines:
    \begin{itemize}
        \item The answer NA means that the abstract and introduction do not include the claims made in the paper.
        \item The abstract and/or introduction should clearly state the claims made, including the contributions made in the paper and important assumptions and limitations. A No or NA answer to this question will not be perceived well by the reviewers. 
        \item The claims made should match theoretical and experimental results, and reflect how much the results can be expected to generalize to other settings. 
        \item It is fine to include aspirational goals as motivation as long as it is clear that these goals are not attained by the paper. 
    \end{itemize}

\item {\bf Limitations}
    \item[] Question: Does the paper discuss the limitations of the work performed by the authors?
    \item[] Answer: \answerYes{} 
    \item[] Justification: We note the limitations of our proposed algorithm, mainly worst-case runtime complexity for selecting $\bS$ and per-word tokenization, in \cref{sec:algo-desc}. We note the limiations of our experiments in the conclusion.
    \item[] Guidelines:
    \begin{itemize}
        \item The answer NA means that the paper has no limitation while the answer No means that the paper has limitations, but those are not discussed in the paper. 
        \item The authors are encouraged to create a separate "Limitations" section in their paper.
        \item The paper should point out any strong assumptions and how robust the results are to violations of these assumptions (e.g., independence assumptions, noiseless settings, model well-specification, asymptotic approximations only holding locally). The authors should reflect on how these assumptions might be violated in practice and what the implications would be.
        \item The authors should reflect on the scope of the claims made, e.g., if the approach was only tested on a few datasets or with a few runs. In general, empirical results often depend on implicit assumptions, which should be articulated.
        \item The authors should reflect on the factors that influence the performance of the approach. For example, a facial recognition algorithm may perform poorly when image resolution is low or images are taken in low lighting. Or a speech-to-text system might not be used reliably to provide closed captions for online lectures because it fails to handle technical jargon.
        \item The authors should discuss the computational efficiency of the proposed algorithms and how they scale with dataset size.
        \item If applicable, the authors should discuss possible limitations of their approach to address problems of privacy and fairness.
        \item While the authors might fear that complete honesty about limitations might be used by reviewers as grounds for rejection, a worse outcome might be that reviewers discover limitations that aren't acknowledged in the paper. The authors should use their best judgment and recognize that individual actions in favor of transparency play an important role in developing norms that preserve the integrity of the community. Reviewers will be specifically instructed to not penalize honesty concerning limitations.
    \end{itemize}

\item {\bf Theory assumptions and proofs}
    \item[] Question: For each theoretical result, does the paper provide the full set of assumptions and a complete (and correct) proof?
    \item[] Answer: \answerYes{} 
    \item[] Justification: The complete proof of tokenization being NP-hard can be found in \cref{sec:np-hardness}, proof of neither submodular nor supermodular is in \cref{sec:appendix-scaredy}, formulation of mixed-integer program is in \cref{sec:MIP} and \cref{sec:appendix-MIP}, relation to weighted maximum coverage is in \cref{sec:appendix-mwc}.
    \item[] Guidelines:
    \begin{itemize}
        \item The answer NA means that the paper does not include theoretical results. 
        \item All the theorems, formulas, and proofs in the paper should be numbered and cross-referenced.
        \item All assumptions should be clearly stated or referenced in the statement of any theorems.
        \item The proofs can either appear in the main paper or the supplemental material, but if they appear in the supplemental material, the authors are encouraged to provide a short proof sketch to provide intuition. 
        \item Inversely, any informal proof provided in the core of the paper should be complemented by formal proofs provided in appendix or supplemental material.
        \item Theorems and Lemmas that the proof relies upon should be properly referenced. 
    \end{itemize}

    \item {\bf Experimental result reproducibility}
    \item[] Question: Does the paper fully disclose all the information needed to reproduce the main experimental results of the paper to the extent that it affects the main claims and/or conclusions of the paper (regardless of whether the code and data are provided or not)?
    \item[] Answer: \answerYes{} 
    \item[] Justification: We provide the code for our tokenization algorithm \textsc{GreedTok} and its compression experiments in the supplementary materials. For large language model pre-training experiment, we detail the specifics in \cref{sec:appendix-pretraining}. Additional pseudocode information is also provided in \cref{sec:appendix-pseudocode}.
    \item[] Guidelines:
    \begin{itemize}
        \item The answer NA means that the paper does not include experiments.
        \item If the paper includes experiments, a No answer to this question will not be perceived well by the reviewers: Making the paper reproducible is important, regardless of whether the code and data are provided or not.
        \item If the contribution is a dataset and/or model, the authors should describe the steps taken to make their results reproducible or verifiable. 
        \item Depending on the contribution, reproducibility can be accomplished in various ways. For example, if the contribution is a novel architecture, describing the architecture fully might suffice, or if the contribution is a specific model and empirical evaluation, it may be necessary to either make it possible for others to replicate the model with the same dataset, or provide access to the model. In general. releasing code and data is often one good way to accomplish this, but reproducibility can also be provided via detailed instructions for how to replicate the results, access to a hosted model (e.g., in the case of a large language model), releasing of a model checkpoint, or other means that are appropriate to the research performed.
        \item While NeurIPS does not require releasing code, the conference does require all submissions to provide some reasonable avenue for reproducibility, which may depend on the nature of the contribution. For example
        \begin{enumerate}
            \item If the contribution is primarily a new algorithm, the paper should make it clear how to reproduce that algorithm.
            \item If the contribution is primarily a new model architecture, the paper should describe the architecture clearly and fully.
            \item If the contribution is a new model (e.g., a large language model), then there should either be a way to access this model for reproducing the results or a way to reproduce the model (e.g., with an open-source dataset or instructions for how to construct the dataset).
            \item We recognize that reproducibility may be tricky in some cases, in which case authors are welcome to describe the particular way they provide for reproducibility. In the case of closed-source models, it may be that access to the model is limited in some way (e.g., to registered users), but it should be possible for other researchers to have some path to reproducing or verifying the results.
        \end{enumerate}
    \end{itemize}

\item {\bf Open access to data and code}
    \item[] Question: Does the paper provide open access to the data and code, with sufficient instructions to faithfully reproduce the main experimental results, as described in supplemental material?
    \item[] Answer: \answerYes{} 
    \item[] Justification: We had already open-sourced our algorithm \textsc{GreedTok} and its compression experiments/data. We made sure that it is easy to integrate \textsc{GreedTok} into existing codebases. 
    
    \item[] Guidelines:
    \begin{itemize}
        \item The answer NA means that paper does not include experiments requiring code.
        \item Please see the NeurIPS code and data submission guidelines (\url{https://nips.cc/public/guides/CodeSubmissionPolicy}) for more details.
        \item While we encourage the release of code and data, we understand that this might not be possible, so “No” is an acceptable answer. Papers cannot be rejected simply for not including code, unless this is central to the contribution (e.g., for a new open-source benchmark).
        \item The instructions should contain the exact command and environment needed to run to reproduce the results. See the NeurIPS code and data submission guidelines (\url{https://nips.cc/public/guides/CodeSubmissionPolicy}) for more details.
        \item The authors should provide instructions on data access and preparation, including how to access the raw data, preprocessed data, intermediate data, and generated data, etc.
        \item The authors should provide scripts to reproduce all experimental results for the new proposed method and baselines. If only a subset of experiments are reproducible, they should state which ones are omitted from the script and why.
        \item At submission time, to preserve anonymity, the authors should release anonymized versions (if applicable).
        \item Providing as much information as possible in supplemental material (appended to the paper) is recommended, but including URLs to data and code is permitted.
    \end{itemize}

\item {\bf Experimental setting/details}
    \item[] Question: Does the paper specify all the training and test details (e.g., data splits, hyperparameters, how they were chosen, type of optimizer, etc.) necessary to understand the results?
    \item[] Answer: \answerYes{} 
    \item[] Justification: Provided in \cref{sec:experiments} and \cref{sec:appendix-pretraining}.
    \item[] Guidelines:
    \begin{itemize}
        \item The answer NA means that the paper does not include experiments.
        \item The experimental setting should be presented in the core of the paper to a level of detail that is necessary to appreciate the results and make sense of them.
        \item The full details can be provided either with the code, in appendix, or as supplemental material.
    \end{itemize}

\item {\bf Experiment statistical significance}
    \item[] Question: Does the paper report error bars suitably and correctly defined or other appropriate information about the statistical significance of the experiments?
    \item[] Answer: \answerNo{} 
    \item[] Justification: The algorithms are deterministic, hence for the compression experiment, we report multiple results of using different corpora and at different $k$ hyperparameter. For the pre-training experiment, we train three variants of 1B parameter language model on up to 629B tokens. It is prohibitively expensive to train multiple of the same variants. Nevertheless, due to the large scale of the training, we do not expect out of range results.
    \item[] Guidelines:
    \begin{itemize}
        \item The answer NA means that the paper does not include experiments.
        \item The authors should answer "Yes" if the results are accompanied by error bars, confidence intervals, or statistical significance tests, at least for the experiments that support the main claims of the paper.
        \item The factors of variability that the error bars are capturing should be clearly stated (for example, train/test split, initialization, random drawing of some parameter, or overall run with given experimental conditions).
        \item The method for calculating the error bars should be explained (closed form formula, call to a library function, bootstrap, etc.)
        \item The assumptions made should be given (e.g., Normally distributed errors).
        \item It should be clear whether the error bar is the standard deviation or the standard error of the mean.
        \item It is OK to report 1-sigma error bars, but one should state it. The authors should preferably report a 2-sigma error bar than state that they have a 96\% CI, if the hypothesis of Normality of errors is not verified.
        \item For asymmetric distributions, the authors should be careful not to show in tables or figures symmetric error bars that would yield results that are out of range (e.g. negative error rates).
        \item If error bars are reported in tables or plots, The authors should explain in the text how they were calculated and reference the corresponding figures or tables in the text.
    \end{itemize}

\item {\bf Experiments compute resources}
    \item[] Question: For each experiment, does the paper provide sufficient information on the computer resources (type of compute workers, memory, time of execution) needed to reproduce the experiments?
    \item[] Answer: \answerYes{} 
    \item[] Justification: For compression experiments, these details can be found in \cref{sec:comparison-of-greedtok-bpe}. For pre-training experiments, these details can be found in \cref{sec:appendix-pretraining}.
    \item[] Guidelines:
    \begin{itemize}
        \item The answer NA means that the paper does not include experiments.
        \item The paper should indicate the type of compute workers CPU or GPU, internal cluster, or cloud provider, including relevant memory and storage.
        \item The paper should provide the amount of compute required for each of the individual experimental runs as well as estimate the total compute. 
        \item The paper should disclose whether the full research project required more compute than the experiments reported in the paper (e.g., preliminary or failed experiments that didn't make it into the paper). 
    \end{itemize}
    
\item {\bf Code of ethics}
    \item[] Question: Does the research conducted in the paper conform, in every respect, with the NeurIPS Code of Ethics \url{https://neurips.cc/public/EthicsGuidelines}?
    \item[] Answer: \answerYes{} 
    \item[] Justification: There is no violation of the code.
    \item[] Guidelines:
    \begin{itemize}
        \item The answer NA means that the authors have not reviewed the NeurIPS Code of Ethics.
        \item If the authors answer No, they should explain the special circumstances that require a deviation from the Code of Ethics.
        \item The authors should make sure to preserve anonymity (e.g., if there is a special consideration due to laws or regulations in their jurisdiction).
    \end{itemize}

\item {\bf Broader impacts}
    \item[] Question: Does the paper discuss both potential positive societal impacts and negative societal impacts of the work performed?
    \item[] Answer: \answerNA{} 
    \item[] Justification: Our research is not tied to particular specific applications.
    \item[] Guidelines:
    \begin{itemize}
        \item The answer NA means that there is no societal impact of the work performed.
        \item If the authors answer NA or No, they should explain why their work has no societal impact or why the paper does not address societal impact.
        \item Examples of negative societal impacts include potential malicious or unintended uses (e.g., disinformation, generating fake profiles, surveillance), fairness considerations (e.g., deployment of technologies that could make decisions that unfairly impact specific groups), privacy considerations, and security considerations.
        \item The conference expects that many papers will be foundational research and not tied to particular applications, let alone deployments. However, if there is a direct path to any negative applications, the authors should point it out. For example, it is legitimate to point out that an improvement in the quality of generative models could be used to generate deepfakes for disinformation. On the other hand, it is not needed to point out that a generic algorithm for optimizing neural networks could enable people to train models that generate Deepfakes faster.
        \item The authors should consider possible harms that could arise when the technology is being used as intended and functioning correctly, harms that could arise when the technology is being used as intended but gives incorrect results, and harms following from (intentional or unintentional) misuse of the technology.
        \item If there are negative societal impacts, the authors could also discuss possible mitigation strategies (e.g., gated release of models, providing defenses in addition to attacks, mechanisms for monitoring misuse, mechanisms to monitor how a system learns from feedback over time, improving the efficiency and accessibility of ML).
    \end{itemize}
    
\item {\bf Safeguards}
    \item[] Question: Does the paper describe safeguards that have been put in place for responsible release of data or models that have a high risk for misuse (e.g., pretrained language models, image generators, or scraped datasets)?
    \item[] Answer: \answerNA{} 
    \item[] Justification: This paper poses no such risks.
    \item[] Guidelines:
    \begin{itemize}
        \item The answer NA means that the paper poses no such risks.
        \item Released models that have a high risk for misuse or dual-use should be released with necessary safeguards to allow for controlled use of the model, for example by requiring that users adhere to usage guidelines or restrictions to access the model or implementing safety filters. 
        \item Datasets that have been scraped from the Internet could pose safety risks. The authors should describe how they avoided releasing unsafe images.
        \item We recognize that providing effective safeguards is challenging, and many papers do not require this, but we encourage authors to take this into account and make a best faith effort.
    \end{itemize}

\item {\bf Licenses for existing assets}
    \item[] Question: Are the creators or original owners of assets (e.g., code, data, models), used in the paper, properly credited and are the license and terms of use explicitly mentioned and properly respected?
    \item[] Answer: \answerYes{} 
    \item[] Justification: We properly credit the original owners of assets, licenses and terms of use are respected and mentioned, e.g. in \cref{sec:appendix-corpus}.
    \item[] Guidelines:
    \begin{itemize}
        \item The answer NA means that the paper does not use existing assets.
        \item The authors should cite the original paper that produced the code package or dataset.
        \item The authors should state which version of the asset is used and, if possible, include a URL.
        \item The name of the license (e.g., CC-BY 4.0) should be included for each asset.
        \item For scraped data from a particular source (e.g., website), the copyright and terms of service of that source should be provided.
        \item If assets are released, the license, copyright information, and terms of use in the package should be provided. For popular datasets, \url{paperswithcode.com/datasets} has curated licenses for some datasets. Their licensing guide can help determine the license of a dataset.
        \item For existing datasets that are re-packaged, both the original license and the license of the derived asset (if it has changed) should be provided.
        \item If this information is not available online, the authors are encouraged to reach out to the asset's creators.
    \end{itemize}

\item {\bf New assets}
    \item[] Question: Are new assets introduced in the paper well documented and is the documentation provided alongside the assets?
    \item[] Answer: \answerYes{} 
    \item[] Justification: We release an open-source code repository. The link was redacted for anonymity reasons. These documentation can be found in the supplementary materials.
    \item[] Guidelines:
    \begin{itemize}
        \item The answer NA means that the paper does not release new assets.
        \item Researchers should communicate the details of the dataset/code/model as part of their submissions via structured templates. This includes details about training, license, limitations, etc. 
        \item The paper should discuss whether and how consent was obtained from people whose asset is used.
        \item At submission time, remember to anonymize your assets (if applicable). You can either create an anonymized URL or include an anonymized zip file.
    \end{itemize}

\item {\bf Crowdsourcing and research with human subjects}
    \item[] Question: For crowdsourcing experiments and research with human subjects, does the paper include the full text of instructions given to participants and screenshots, if applicable, as well as details about compensation (if any)? 
    \item[] Answer: \answerNA{} 
    \item[] Justification: This paper does not involve crowdsourcing nor research with human subjects.
    \item[] Guidelines:
    \begin{itemize}
        \item The answer NA means that the paper does not involve crowdsourcing nor research with human subjects.
        \item Including this information in the supplemental material is fine, but if the main contribution of the paper involves human subjects, then as much detail as possible should be included in the main paper. 
        \item According to the NeurIPS Code of Ethics, workers involved in data collection, curation, or other labor should be paid at least the minimum wage in the country of the data collector. 
    \end{itemize}

\item {\bf Institutional review board (IRB) approvals or equivalent for research with human subjects}
    \item[] Question: Does the paper describe potential risks incurred by study participants, whether such risks were disclosed to the subjects, and whether Institutional Review Board (IRB) approvals (or an equivalent approval/review based on the requirements of your country or institution) were obtained?
    \item[] Answer: \answerNA{} 
    \item[] Justification: This paper does not involve crowdsourcing nor research with human subjects.
    \item[] Guidelines:
    \begin{itemize}
        \item The answer NA means that the paper does not involve crowdsourcing nor research with human subjects.
        \item Depending on the country in which research is conducted, IRB approval (or equivalent) may be required for any human subjects research. If you obtained IRB approval, you should clearly state this in the paper. 
        \item We recognize that the procedures for this may vary significantly between institutions and locations, and we expect authors to adhere to the NeurIPS Code of Ethics and the guidelines for their institution. 
        \item For initial submissions, do not include any information that would break anonymity (if applicable), such as the institution conducting the review.
    \end{itemize}

\item {\bf Declaration of LLM usage}
    \item[] Question: Does the paper describe the usage of LLMs if it is an important, original, or non-standard component of the core methods in this research? Note that if the LLM is used only for writing, editing, or formatting purposes and does not impact the core methodology, scientific rigorousness, or originality of the research, declaration is not required.
    \item[] Answer: \answerNA{} 
    \item[] Justification: Core method development in this research does not involve LLMs as any important, original, or non-standard components. We only pretrain LLMs to evaluate the impact of our proposed algorithm.
    \item[] Guidelines:
    \begin{itemize}
        \item The answer NA means that the core method development in this research does not involve LLMs as any important, original, or non-standard components.
        \item Please refer to our LLM policy (\url{https://neurips.cc/Conferences/2025/LLM}) for what should or should not be described.
    \end{itemize}

\end{enumerate}

\appendix

\newpage
\section{Example illustrating that \texorpdfstring{\cref{prob:tokenization-search}}{Problem 1} is neither submodular nor supermodular}
\label{sec:appendix-scaredy}

In this section, we show that the objective of \cref{prob:tokenization-search} is neither submodular nor supermodular.
Using $2^\bT$ to denote the powerset of $\bT$, submodular and supermodular set functions are defined as follows:

\begin{definition}[Submodular Set Function]
\label{def:submod}
A real-valued set function $f: 2^\bT \to \R$ is submodular if $f(\bA \cup \{C\}) - f(\bA) \geq f(\cA \cup \{C\}) - f(\cA)$ for all $\bA \subseteq \cA \subseteq \bT$ and $C \in \bT \setminus \cA$.
\end{definition}

\begin{definition}[Supermodular Set Function]
\label{def:supermod}
A real-valued set function $f: 2^\bT \to \R$ is supermodular if $f(\bA \cup \{C\}) - f(\bA) \leq f(\cA \cup \{C\}) - f(\cA)$ for all $\bA \subseteq \cA \subseteq \bT$ and $C \in \bT \setminus \cA$.
\end{definition}

In the context of the tokenization problem, the set $\bT$ represents the candidate set of all possible tokens.
Unfortunately for us, \cref{prob:tokenization-search} is neither submodular nor supermodular; see \cref{tab:nonsubmodular_example}.

\begin{table}[htb]
    \centering
    \caption{The above table shows that the objective function of \cref{prob:tokenization-search} is neither supermodular nor submodular.
    Suppose we wish to tokenize the word $W = \text{scaredy}$ with candidate token set $\bT = \{\text{care, edy, scar, scared, dy}\}$ and singletons $\{\text{s,c,a,r,e,d,y}\}$, and the function $f$ outputs the \emph{smallest possible number of final tokens} used to represent $W$, i.e.\ the objective function of \cref{prob:tokenization-search} on a single word corpus.
    Observe that $\bX \subseteq \bY \subseteq \bT$, $Z \in \bT \setminus \bY$, $\bX \subseteq \bY' \subseteq \bT$, and $Z' \in \bT \setminus \bY'$.
    In case 1, using $\bX$ to tokenize $W$ results in using 4 tokens $(\text{s, care, d, y})$ and one can check that using $\bX \cup \{Z\}$ also results in 4 tokens.
    On the other hand, using $\bY$ results in 4 tokens $(\text{s, care, d, y})$ but using $\bY \cup \{Z\}$ results in 2 tokens $(\text{scar, edy})$.
    Therefore, $f(\bX \cup \{Z\}) - f(\bX) > f(\bY \cup \{Z\}) - f(\bY)$ and thus $f$ is \emph{not} supermodular.
    On the other hand, in case 2, using $\bY'$ and $\bY' \cup \{Z'\}$ to tokenize $W$ results in 2 tokens $(\text{scared, y})$ while using $\bX \cup \{Z'\}$ results in 3 tokens $(\text{s, care, dy})$.
    Therefore, $f(\bX \cup \{Z'\}) - f(\bX) < f(\bY' \cup \{Z'\}) - f(\bY')$ and thus $f$ is \emph{not} submodular.}
    \label{tab:nonsubmodular_example}
\resizebox{\linewidth}{!}{%
    \begin{tabular}{ll|ll}
    \toprule
    \multicolumn{4}{c}{Single word corpus $W = \text{scaredy}$ with $\textsc{count}(W) = 1$}\\
    \midrule
    \multicolumn{2}{c}{Case 1} & \multicolumn{2}{c}{Case 2}\\
    \cmidrule(lr){1-2}\cmidrule(lr){3-4}
    $\bX = \{\text{care}\}$ & $f(\bX) = f(\bX \cup \{Z\}) = 4$  & $\bX = \{\text{care}\}$ & $f(\bX) = 4$\\
    $\bY = \{\text{care, edy}\}$ & $f(\bY) = 4$ & $\bY' = \{\text{care, scared}\}$ & $f(\bY') = f(\bY' \cup \{Z'\}) = 2$\\
    $Z = \text{scar}$ & $f(\bY \cup \{Z\}) = 2$ & $Z' = \text{dy}$ & $f(\bX \cup \{Z'\}) = 3$\\
    \cmidrule(lr){1-2}\cmidrule(lr){3-4}
    \multicolumn{2}{c}{$0 = f(\bX \cup \{Z\}) - f(\bX) > f(\bY \cup \{Z\}) - f(\bY) = -2$} & \multicolumn{2}{|c}{$-1 = f(\bX \cup \{Z'\}) - f(\bX) < f(\bY' \cup \{Z'\}) - f(\bY') = 0$}\\
    \bottomrule
    \end{tabular}}
\end{table}
\section{Mixed integer program formulation}
\label{sec:appendix-MIP}

In this section, we give full details of our mixed integer program (MIP) formulation and provide examples for better understanding.

To formulate \cref{prob:tokenization-search} as an MIP, our goal is to choose a subset $\bS \subseteq \bT$ of size $|\bS| \leq k$ such that the following objective is maximized, encoding $\max \sum_{W \in \bW} \textsc{count}(W) \cdot \textsc{cover}(W, \bS)$:
\begin{equation}
\label{eq:ILP-objective}
\max \sum_{W \in \bW} c_W \cdot \left( \sum_{i=1}^{|W|-1} m^{W}_{i,i+1} \right)
\end{equation}
with the following binary variables, where $c_W = \textsc{count}(W)$:
\begin{itemize}
    \item $x_T \in \{0,1\}$, for all tokens $T \in \bT$\\
    \emph{Did we choose token $T \in \bT$, i.e. $T \in \bS$?}
    \item $m^{W}_{1,2}, \ldots, m^{W}_{|W|-1, |W|}$ $\in \{0,1\}$, for all words $W \in \bW$\\
    \emph{Are the $i^{th}$ singleton $W_i$ and the $(i+1)^{th}$ singleton $W_{i+1}$ covered by the same token?}
    \item $m^{W,T}_{1,2}, \ldots, m^{W,T}_{|W|-1, |W|}$ $\in \{0,1\}$, for all words $W \in \bW$ and tokens $T \in \bT$\\
    \emph{Did token $T \in \bS$ cover the $i^{th}$ singleton $W_i$ and the $(i+1)^{th}$ singleton $W_{i+1}$?}
\end{itemize}
under the following constraints:
\begin{small}
\begin{align}
\sum_{T \in \bT} x_T & \leq k & \label{eq:choosing-S}\\
x_T & \geq m^{W,T}_{i,i+1} && \text{if $(W_i, W_{i+1}) \subseteq T$} & \forall W \in \bW, \forall T \in \bT, \forall i \in \{1, \ldots, |W|-1\} & \label{eq:can-only-use-T-if-activated}\\
\sum_{T \in \bT} m^{W,T}_{i,i+1} & \geq m^{W}_{i,i+1} &&& \forall W \in \bW, \forall T \in \bT, \forall i \in \{1, \ldots, |W|-1\}& \label{eq:choose-some-T-if-merge}\\
\sum_{T \in \bT} m^{W,T}_{i,i+1} & \leq 1 &&& \forall W \in \bW, \forall T \in \bT, \forall i \in \{1, \ldots, |W|-1\} & \label{eq:merge-at-most-once}\\
m^{W,T}_{i,i+1} &= m^{W,T}_{i+1,i+2} && \text{if $(W_i, W_{i+1}, W_{i+2}) \subseteq T$} & \forall W \in \bW, \forall T \in \bT, \forall i \in \{1, \ldots, |W|-2\} & \label{eq:either-use-entire-T-or-none}\\
\sum_{T \in \bT} m^{W,T}_{s-1,s} & \leq 1 - m^{W,T}_{s,s+1} && \text{if $T$ starts with $(W_s, W_{s+1})$} & \forall W \in \bW, \forall T \in \bT, \forall s \in \{2, \ldots, |W|-1\} & \label{eq:block-merges-before-if-T}\\
\sum_{T \in \bT} m^{W,T}_{e,e+1} & \leq 1 - m^{W,T}_{e-1,e} && \text{if $T$ ends with $(W_{e-1}, W_{e})$} & \forall W \in \bW, \forall T \in \bT, \forall e \in \{2, \ldots, |W|-1\} & \label{eq:block-merges-after-if-T}
\end{align}
\end{small}

We remark that the objective \cref{eq:ILP-objective} can be re-expressed as $\max \sum_{T \in \bT} \sum_{W \in \bW} \sum_{i=1}^{|W|-1} c_W \cdot m^{W,T}_{i,i+1}$, making \cref{eq:choose-some-T-if-merge} redundant.
However, this current formulation is useful for showing how to relax \textsc{Tok} to \textsc{WMC} later.

Now, let us interpret and explain the constraints.
\cref{eq:choosing-S} models the constraint that we are choosing a subset of size $|\bS| \leq k$.
\cref{eq:can-only-use-T-if-activated} models the constraint that we can only use $T \in \bT$ to cover if it is chosen in $\bS$.
\cref{eq:choose-some-T-if-merge} models the constraint that if a cover happened between two adjacent singletons, then a relevant $T \in \bT$ must have been chosen in $\bS$.
However, \cref{eq:merge-at-most-once} models the constraint of only covering two adjacent singletons with a single relevant $T \in \bS$.
\cref{eq:either-use-entire-T-or-none} models the constraint of covering the entire substring $T \in \bT$, or leave it uncovered.
\cref{eq:block-merges-before-if-T} and \cref{eq:block-merges-after-if-T} model the constraints preventing the chosen substring $T \in \bT$ from sharing the cover with another partially overlapping $T$.

In the following examples, we succinctly write $m^W$ and $m^{W,T}$ in the forms of $(m^{W}_{1,2}, m^{W}_{2,3}, \dots, m^{W}_{|W|-1,|W|})$ and $(m^{W,T}_{1,2}, m^{W,T}_{2,3},\dots, m^{W,T}_{|W|-1,|W|})$ respectively, for any word $W \in \bW$ and token $T \in \bT$.

\begin{example}
Consider the word $W = \text{ababa}$ and the token $T = \text{aba}$ has $x_T = 1$, i.e.\ $T \in \bS \subseteq \bT$.
If we \emph{only use $T$} to cover singletons in $W$ with left-to-right priority, then the resultant tokenized form of $W$ is aba\texttt{\char32}b\texttt{\char32}a.
So, $m^W = (1, 1, 0, 0)$, $m^{W,T} = (1, 1, 0, 0)$, and $m^{W,T'} = (0, 0, 0, 0)$ for all $T' \in \bT \setminus \{T\}$.
Observe that the $0$ bits in $m^W$ precisely denote the partitioning positions within $W$.
Furthermore, the constraints \cref{eq:merge-at-most-once} and \cref{eq:either-use-entire-T-or-none} ensure that $T$ is the only token that occupies the first two adjacent singletons, while constraints \cref{eq:block-merges-before-if-T} and \cref{eq:block-merges-after-if-T} prevent an invalid overlap of $T$ for the last two adjacent singletons.
Now, suppose if we also have $T' = \text{ba}$ with $x_{T'} = 1$.
Using both $T$ and $T'$ to tokenize $W$ results in aba\texttt{\char32}ba with $m^W = (1, 1, 0, 1)$, $m^{W,T} = (1, 1, 0, 0)$, $m^{W,T'} = (0, 0, 0, 1)$, and $m^{W,T''} = (0, 0, 0, 0)$ for all $T'' \in \bT \setminus \{T, T'\}$.
\end{example}

\begin{example}
Tokenizing the word $W = \text{abcdef}$ using only tokens $S_1 = \text{bc}$ and $S_2 = \text{de}$ yields a\texttt{\char32}bc\texttt{\char32}de\texttt{\char32}f.
This corresponds to $m^W = (0, 1, 0, 1, 0)$, $m^{W, S_1} = (0, 1, 0, 0, 0)$, $m^{W, S_2} = (0, 0, 0, 1, 0)$, and $m^{W, T} = (0, 0, 0, 0, 0)$ for all $T \in \bT \setminus \{S_1, S_2\}$.
Meanwhile, tokenizing the word $W = \text{abcdef}$ using only token $S_3 = \text{bcde}$ yields a\texttt{\char32}bcde\texttt{\char32}f, corresponding to $m^W = (0, 1, 1, 1, 0)$, $m^{W, S_3} = (0, 1, 1, 1, 0)$, and $m^{W, T} = (0, 0, 0, 0, 0)$ for all $T \in \bT \setminus \{S_3\}$.
Observe that using token $S_3$ alone directly accomplishes what a typical bottom-up pairwise merge sequence from \textsc{BPE} would do: apply $S_1$ to merge `b' with `c', $S_2$ to merge `d' with `e', then $S_3$ to merge `bc' with `de'.
\end{example}
\newpage
\section{Relation to the weighted maximum coverage problem}
\label{sec:appendix-mwc}

In this section, we provide details on how our MIP formulation in \cref{sec:MIP} naturally relaxes into the well known weighted maximum coverage problem (\textsc{WMC}).

Given a set of unique elements $\bL = \{L_1, \dots, L_{|\bL|} \}$ and their corresponding weights $\cW = \{w_1, \dots, w_{|\bL|}\}$, a collection of sets $\bU=\{U_1, \dots, U_{|\bU|}\}$ where $U \in \bU \subseteq \bL$, and a number $k$, we want to find a subset $\bU' \subseteq \bU$ such that $|\bU'| \leq k$ and the total weights of covered elements $\sum_{L_i \in \bigcup \bU'} w_i$ is maximized.
Formulating \textsc{WMC} as a mixed integer program, we have the objective:
\begin{equation}
\label{eq:MC-objective}
\max \sum_{L_i \in \bL} w_i\ell_i
\end{equation}
with the following variables:
\begin{itemize}
    \item $\ell_i \in \{0,1\}$, for all $L_i \in \bL$\\
    \emph{Did we choose element $L_i \in \bL$, i.e.\ is $L_i$ covered?}
    \item $\mu_j \in \{0,1\}$, for all $U_j \in \bU$\\
    \emph{Did we choose set $U_j \in \bU$, i.e.\ is $U_j \in \bU'$?}
\end{itemize}
under the following constraints:
\begin{align}
\sum_{U_j \in \bU} \mu_j & \leq k & \label{eq:mc-choosing-U}\\
\sum_{L_i \in U_j}^{\bU} \mu_j & \geq \ell_i 
&\forall \ell_i \in \bL &
& & \label{eq:mc-choose-some-U-if-L}
\end{align}

Let us now interpret and explain the constraints.
\cref{eq:mc-choosing-U} limits the number of selected sets $\leq k$. 
\cref{eq:mc-choose-some-U-if-L} ensures that if an element is covered, at least one of the sets containing the element must be included in $\bU'$.
To see that \textsc{WMC} is a relaxation of \textsc{Tok}, we first establish a mapping between the variables between \textsc{Tok} and \textsc{WMC}:
\begin{itemize}
    \item $m^W_{i,i+1}$ $\to$ $\ell_i$\\
    \emph{decision of covering adjacent singletons $\to$ decision of covering element}
    \item $x_T$ $\to$ $\mu_j$\\
    \emph{decision of including $T\in\bS$ $\to$ decision of including $U \in \bU'$ }
    \item $m^{W,T}_{i,i+1}$ $\to$ $L_i \in U_j$\\
    \emph{adjacent singletons in $W$ and $T$ $\to$ element membership in set}
    \item{$\sum_{W\in\bW} c_W$ $\to$ $w_i$}\\
    \emph{sum count of $W$ with adjacent singletons $\to$ weight of element}
\end{itemize}
Next, comparing the objectives, we can see that \cref{eq:ILP-objective} and \cref{eq:MC-objective} have the exact same objective when utilizing the mapping between variables.
Finally, we demonstrate a relaxation of \textsc{Tok}'s constraints:
\begin{itemize}
    \item \cref{eq:choosing-S} and \cref{eq:mc-choosing-U} are equivalent\\
    \emph{select at most $k$ number of $T$ and $U$ respectively}
    \item Combining \cref{eq:can-only-use-T-if-activated} and \cref{eq:choose-some-T-if-merge} gives us \cref{eq:mc-choose-some-U-if-L}.\\
    $\sum_{T\in \bT} x_T \geq \sum_{T \in \bT} m^{W,T}_{i,i+1} \geq m^{W}_{i,i+1} \to \sum_{L_i \in U_j}^{\bU} \mu_j \geq \ell_i$
    \item We remove the constraints \cref{eq:merge-at-most-once}, \cref{eq:either-use-entire-T-or-none}, \cref{eq:block-merges-before-if-T}, and \cref{eq:block-merges-after-if-T}.
\end{itemize}
Notice that for \textsc{Tok}, we disentangle \cref{eq:can-only-use-T-if-activated} and \cref{eq:choose-some-T-if-merge} using the specification of $m^{W,T}_{i,i+1}$ to enable \cref{{eq:merge-at-most-once}}, limiting the covering of an element to one selected set.
\newpage

\section{\textsc{Tok}'s relation to \textsc{Unigram}}
\label{sec:relation-to-unigram}

Let $\bT$ represent the set of all possible subword sequences. The probability of a subword sequence $\vec{W} = (T_1, \dots, T_M)$ where $T \in \bT$ formulated as a product of subword probabilities:
\begin{equation*}
    P(\vec{W}) = \Pi^M_{i=1} p(T_i)
\end{equation*}
where $\sum^\bT_T p(T) = 1$. Since a word $W$ in corpus $D$ can be represented by different possible subword sequences $\cS(W)$, let $W^*$ be the most probable segmentation:
\begin{equation*}
    W^* = \argmax_{\vec{W} \in \cS(W)} P(\vec{W})
\end{equation*}

Since $\cS(D_s)$, segmentation candidates of sentence $D_s$, will be individual words $w$ (based on \texttt{sentencepiece} default implementation). Therefore, Unigram seeks to minimize the reduction in likelihood $\cL$ amongst words in given corpus:
\begin{equation*}
        \max \cL = \sum^{|D|}_{s=1} \sum_{W \in \cS(D_s)} \log  W^*
\label{eq:max_cL_appendix}
\end{equation*}
To map $\cL$ to \textsc{Tok}'s objective of $\min \sum^\bW_W \textsc{count}(W)\cdot partition(W)$:
\begin{align*}
        \max \cL &= \sum^{|D|}_{s=1} \sum_{w \in \cS(D_s)} \log  W^*\\
        &= \sum^{\bW}_{W} \textsc{count}(W)\cdot \log W^* &(\text{group by words})\\
        &= \sum^{\bW}_{W} \textsc{count}(W)\cdot \log \Pi^{W^*}_{W_i} p(W_i) &(\text{choose best subword segmentation})\\
        &= \sum^{\bW}_{W} \textsc{count}(W)\cdot(\log p(W^*_i) + \dots + \log p(W^*_{|W^*|})) & (\text{notice } partition(W) = |W^*|)
\end{align*}

Notice that after grouping all similar words together, we get a weighted partition $(\log p(W_i) + \dots + \log p(W_{|W^*|}))$. 
For $\textsc{Tok}$, it is $(1 + \dots + 1) = |W^*|$. 
Maximizing $\cL$ is equivalent to minimizing negative log probability weighted partitions.
Due to the presence of $\log$, $\max \cL$ is equivalent to $\min \text{weighted } \textsc{Tok}$. 
We can see that Unigram favors frequently occurring subwords in a non-linear fashion. 
This relation is also noted in \cite{schmidt2024tokenization}, where their proposed \textsc{PathPiece} tokenization algorithm optimizes for \textsc{Tok} from top-down pruning of \textsc{BPE}/\textsc{Unigram} shortlisted candidates.

\begin{table}[htb]
\centering
\caption{Top-down pruning solution for the given example using \textsc{Unigram}, removing tokens that results in the least decrease in $\cL$, $\triangle \cL$, in each iteration. 
The same results will also be obtained when optimizing for \textsc{Tok} from a top-down pruning approach. 
For iteration 2, $b$ denotes the singletons which are omitted. Final $\bS = \{\text{``random''}, \text{``randose''}\}$.}
\label{tab:bad_example}
\resizebox{\textwidth}{!}{%
\begin{tabular}{l|ll|lcc|ll|l}
\toprule
&\textbf{W} & $\log W$* & $\bT$ &  \textsc{count} & $p(T)$ & \multicolumn{2}{c|}{Removing results in new segments} & $\triangle \cL$ \\\midrule
\multirow{7}{*}{\rotatebox{90}{\textbf{Iteration 1}}}&random & -1.505& random  & 1 & 0.0312  & {random} & random* $= p(\text{``rand''}) \cdot p(\text{``o''}) \cdot p(\text{``m''})$& -2.056\\
&randose& -1.505& randose & 1 & 0.0312  & {randose}& randose*$ = p(\text{``rand''}) \cdot p(\text{``ose''})$& -0.727\\
&rosey  & -1.505& rosey   & 1 & 0.0312  & {rosey}  & randy* $ = p(\text{`r'}) \cdot p(\text{``ose''}) \cdot p(\text{`y'})$& -1.806\\
&randy  & -1.505& randy   & 1 & 0.0312  & {randy}  & rosye* $ = p(\text{``rand''}) \cdot p(\text{`y'}) $& -0.727\\
&       &       & \textbf{rand}    & 3 & 0.0937  & {rand}   & None& \textbf{0}\\
&       &       & \textbf{ose}     & 2 & 0.0625  & {ose}    & None& \textbf{0}\\\cmidrule(r){2-9}
&\multicolumn{7}{l}{Decision after iteration 1: remove ``rand'' and ``ose'', next iteration:}\\\midrule
\multirow{5}{*}{\rotatebox{90}{\textbf{Iteration 2}}}&random & -1.431& {random}  & 1 & 0.0370 & {random} & random* = $\sum_{b \in W^*} p(b)$ & -4.646\\
&randose& -1.431& {randose} & 1 & 0.0370 & {randose}& random* = $\sum_{b \in W^*} p(b)$ & -5.475\\
&rosey  & -1.431& {\textbf{rosey}}   & 1 & 0.0370 & {rosey}  & rosey* = $\sum_{b \in W^*} p(b)$  & \textbf{-3.743}\\
&randy  & -1.431& {\textbf{randy}}   & 1 & 0.0370 & {randy}  & randy* = $\sum_{b \in W^*} p(b)$  & \textbf{-3.391}\\\cmidrule(r){2-9}
&\multicolumn{7}{l}{Decision after iteration 2: remove ``rosey'' and ``randy''. Total partitions = 1 + 1 + 5 + 5 = \textbf{12}.} \\\bottomrule                                    
\end{tabular}%
}
\end{table}

\begin{table}[t]
\centering
\caption{\textsc{GreedTok}'s solution for the given example, selecting tokens that results in the highest objective gain in each iteration.
Final $\bS = \{\text{``rand''}, \text{``rosey''}\}$ or $\{\text{``rand''}, \text{``ose''}\}$.}
\label{tab:good_example}
\begin{tabular}{l|lc|lc}\toprule
           & \multicolumn{2}{c}{\textbf{Iteration 1}}   & \multicolumn{2}{|c}{\textbf{Iteration 2}}\\
\textbf{W} & \textbf{T}         & obj. gain   & \textbf{T}& obj. gain\\\midrule
random & random & 5 & random & 2\\
randose& randose& 6 & randose& 3\\
randy  & rosey  & 4 & \textbf{rosey}  & \textbf{4}\\
rosey  & randy  & 4 & randy  & 1\\
       & ose    & 4 & \textbf{ose}    & \textbf{4}\\
       & \textbf{rand}   & \textbf{9} &  &\\\midrule
Decision   & \multicolumn{2}{l}{pick ``rand''} & \multicolumn{2}{|l}{pick ``rosey'' or ``ose''}\\\midrule
\multicolumn{4}{l}{if pick ``rosey'': Total partitions = 3 + 4 + 1 + 2 = \textbf{10}}\\
\multicolumn{4}{l}{if pick ``ose'': Total partitions = 3 + 2 + 3 + 2 = \textbf{10}}\\\bottomrule
\end{tabular}
\end{table}

We present a scenario where top-down pruning is sub-optimal for compression. Given $\bW = \{\text{``random''}, \text{``randose''}, \text{``rosey''}, \text{``randy''}\}$, each with a \textsc{count} of 1, and $\bT = \bW \bigcup \{\text{``rand''}, \text{``ose''} \}$, we wish to select $\bS \subset \bT$, where $|\bS|=k=2$ tokens.

\cref{tab:bad_example} shows the token pruning process of \textsc{Unigram} and \cref{tab:good_example} shows the greedy approach of \textsc{GreedTok}. For this scenario, \textsc{GreedTok} obtained a better solution with 10 partitions compared to \textsc{Unigram}'s 12. This scenario highlights the skew to select whole words when approaching tokenization from a pruning angle. If such scenarios are common, then we can expect \textsc{GreedTok} to obtain a better tokens per word ratio in our experiments (\cref{sec:comparison-of-greedtok-bpe}).

\newpage
\section{Analyzing \textsc{GreedTok}'s Characteristics}
\label{sec:analyzing_characteristics}
After obtaining the token sets of \textsc{GreedTok}/\textsc{BPE}/\textsc{Unigram} from our experiments (\cref{sec:experiments}), along three dimensions: 1) proportion of common tokens in a pairwise comparison, 2) proportion of whole words and 3) \textsc{Unigram}'s $\cL$ objective (\cref{eq:max_cL_appendix}).

\begin{figure}[htb]
  \centering
  \includegraphics[width=.95\linewidth]{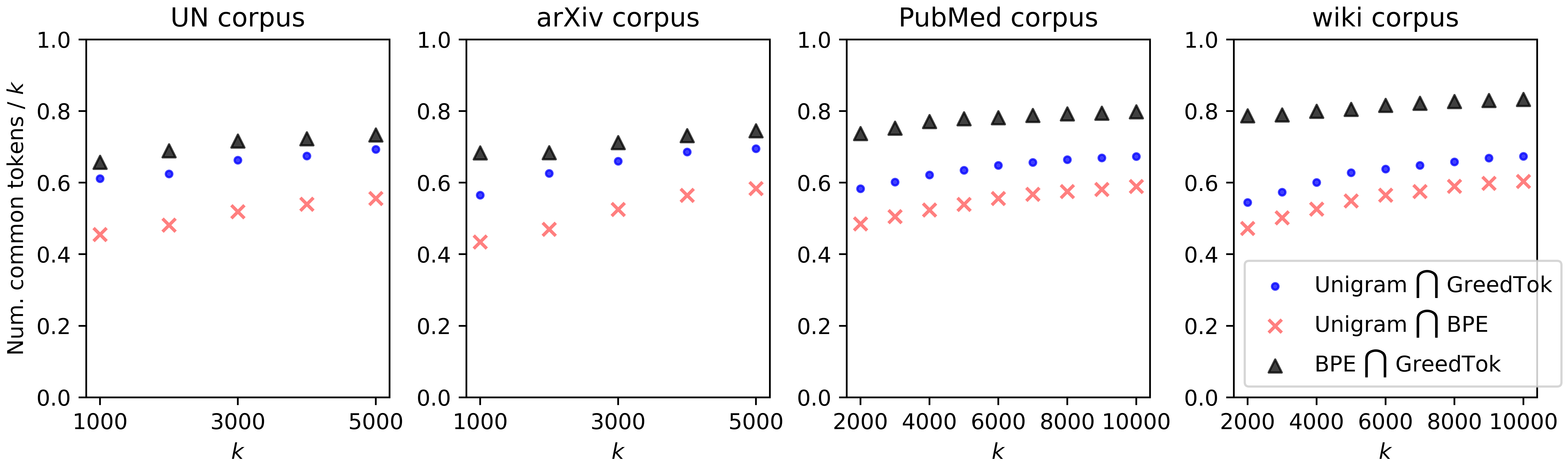}
  \caption{Plots showing the ratio of common tokens between token sets from \textsc{GreedTok}/\textsc{BPE}/\textsc{Unigram}. 
  From the plots, we observe that while \textsc{GreedTok} and \textsc{BPE} are most alike, \textsc{GreedTok} shares more similarties to \textsc{Unigram}, compared to \textsc{BPE}.}
  \label{fig:common}
\end{figure}

\paragraph{Common tokens.} To probe \textsc{GreedTok}'s token set, we analyze the common tokens from pairwise comparison of the tokenization methods investigated. From \cref{fig:common}, we observe that \textsc{BPE} $\bigcap$ \textsc{GreedTok} share a large proportion of common tokens. The remaining difference can be observed in the greater proportion of \textsc{Unigram} $\bigcap$ \textsc{GreedTok} compared to \textsc{Unigram} $\bigcap$ \textsc{BPE}. The consistent results across the four corpora, and at different $|\bS| = k$, implies that \textsc{GreedTok} contains the characteristics of \textsc{BPE} due to the high proportionality of common tokens. However, we require further investigation into whether \textsc{GreedTok} contains \textsc{Unigram}'s characteristics.

\begin{figure}[htb!]
  \centering
  \includegraphics[width=.95\linewidth]{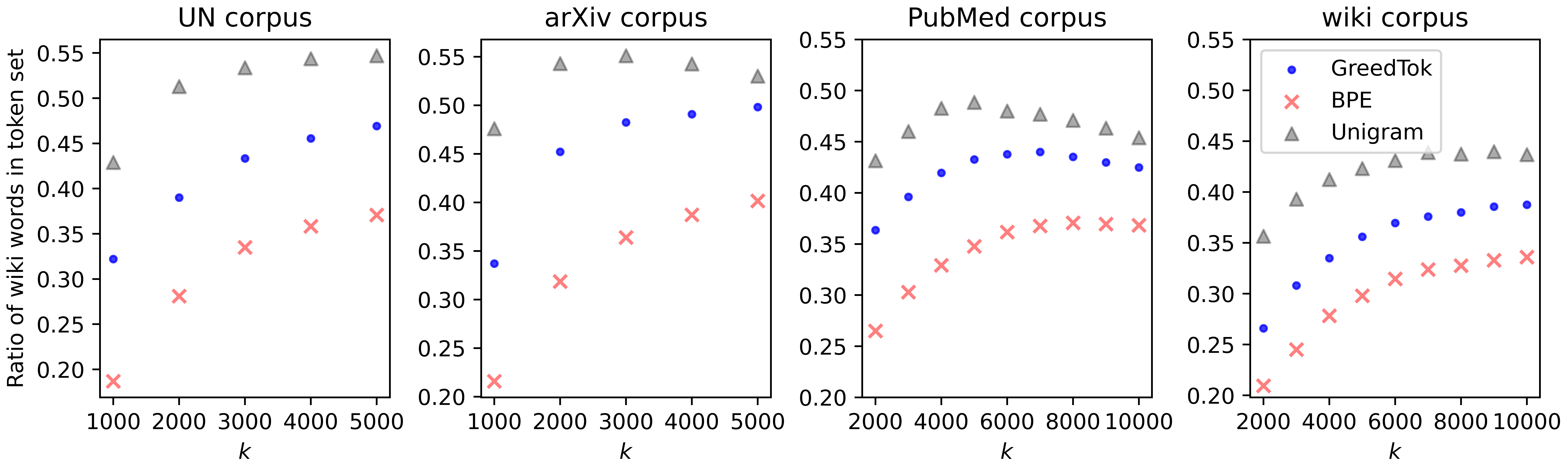}
  \caption{Plots showing the ratio of wiki words found in the token sets from \textsc{GreedTok}/\textsc{BPE}/\textsc{Unigram} with different $|\bS| = k$. 
  We select the top 40K frequently occuring words excluding stop words to approximate a frequent word list.
  From the plots, \textsc{Unigram} has the largest ratio of wiki words, followed by \textsc{GreedTok} and then \textsc{BPE}.
  }
  \label{fig:words_in_tokens}
\end{figure}

\paragraph{Whole words.} As hypothesized in \cref{sec:relation-to-unigram}, there are scenarios that will lead \textsc{Unigram} to select whole words, skipping the intermediate tokens that \textsc{BPE} may require. To further investigate, we approximate a 40,0000 common word list using the most frequent words appearing in \texttt{wiki}, and then use it to compare to the token sets obtained. From \cref{fig:words_in_tokens}, across the four corpora and at different $|\bS| = k$, we observe that \textsc{Unigram} has the highest proportion of whole words, then followed by \textsc{GreedTok} and \textsc{BPE} respectively. With \textsc{GreedTok} selecting more whole words, compared to \textsc{BPE}, suggests that \textsc{GreedTok} may also exhibit the behaviour favored by \textsc{Unigram}.

\begin{table}[htb!]
\centering 
\caption{In this table, we report $\cL$ obtained by \textsc{GreedTok}/\textsc{BPE}/\textsc{Unigram} under various $|\bS|=k$ settings. We observe that \textsc{GreedTok}, compared to \textsc{BPE}, achieving a closer $\cL$ to \textsc{Unigram}.}
\label{tab:uniloss}
\resizebox{\linewidth}{!}{%
\begin{tabular}{lcrrrrr|crrrrr}\toprule
&\textit{k} & \multicolumn{1}{c}{\textbf{1000}} & \multicolumn{1}{c}{\textbf{2000}} & \multicolumn{1}{c}{\textbf{3000}} & \multicolumn{1}{c}{\textbf{4000}} & \multicolumn{1}{c}{\textbf{5000}} &&\multicolumn{1}{c}{\textbf{2000}} & \multicolumn{1}{c}{\textbf{4000}} & \multicolumn{1}{c}{\textbf{6000}} & \multicolumn{1}{c}{\textbf{8000}} & \multicolumn{1}{c}{\textbf{10000}} \\ \midrule
\textsc{GreedTok (GT)} $\cL$ &\multirow{5}{*}{\rotatebox{90}{\texttt{UN}}}  & -4.15E08 & -3.85E08 & -3.69E08 & -3.60E08 & -3.53E08 & \multirow{5}{*}{\rotatebox{90}{\texttt{PubMed}}} & -5.03E10 & -4.71E10 & -4.56E10 & -4.47E10 & -4.41E10\\\cmidrule(l){3-7}\cmidrule(lr){9-13}
\textsc{BPE} $\cL$&& -4.37E08 & -4.01E08 & -3.83E08 & -3.71E08 & -3.63E08&& -5.19E10 & -4.83E10 & -4.66E10 & -4.56E10 & -4.49E10  \\
\textsc{GT}'s Improvement (\%) && \textbf{5.08\%} & \textbf{4.00\%} & \textbf{3.48\%} & \textbf{3.06\%} & \textbf{2.71\%} && \textbf{3.19\%} & \textbf{2.61\%} & \textbf{2.26\%} & \textbf{2.01\%} & \textbf{1.68\%}\\\cmidrule(l){3-7}\cmidrule(lr){9-13}
\textsc{Unigram} $\cL$&& -3.90E08 & -3.60E08 & -3.47E08 & -3.38E08 & -3.33E08 &&-4.87E10 & -4.56E10 & -4.43E10 & -4.35E10 & -4.31E10 \\
\textsc{GT}'s Improvement (\%) && -6.29\% & -6.77\% & -6.54\% & -6.39\% & -6.09\% && -3.30\% & -3.15\% & -2.81\% & -2.58\% & -2.46\%\\ \midrule

\textsc{GreedTok (GT)}  $\cL$ &\multirow{5}{*}{\rotatebox{90}{\texttt{ar$\chi$iv}}} & -4.49E09 & -4.13E09 & -3.96E09 & -3.85E09 & -3.78E09 &\multirow{5}{*}{\rotatebox{90}{\texttt{wiki}}}& -3.76E10 & -3.55E10 & -3.44E10 & -3.37E10 & -3.32E10 \\\cmidrule(l){3-7}\cmidrule(lr){9-13}
\textsc{BPE} $\cL$&& -4.74E09 & -4.34E09 & -4.12E09 & -3.99E09 & -3.89E09&& -3.85E10 & -3.62E10 & -3.50E10 & -3.42E10 & -3.37E10  \\
\textsc{GT}'s Improvement (\%) && \textbf{5.37\%} & \textbf{4.91\%} & \textbf{3.99\%} & \textbf{3.37\%} & \textbf{2.94\%}&& \textbf{2.34\%} & \textbf{2.03\%} & \textbf{1.84\%} & \textbf{1.56\%} & \textbf{1.41\%} \\\cmidrule(l){3-7}\cmidrule(lr){9-13}
\textsc{Unigram} $\cL$&& -4.32E09 & -3.97E09 & -3.82E09 & -3.74E09 & -3.68E09 &&-3.61E10 & -3.42E10 & -3.32E10 & -3.26E10 & -3.22E10\\
\textsc{GT}'s Improvement (\%) && -3.99\% & -4.10\% & -3.51\% & -3.00\% & -2.59\%&& -4.13\% & -3.70\% & -3.57\% & -3.35\% & -3.12\% \\ \bottomrule
\end{tabular}%
}
\end{table}

\paragraph{Unigram's objective.} Finally, another way that we can investigate the closeness of \textsc{Unigram} and \textsc{GreedTok}, is to examine the $\cL$ of their token sets. From \cref{tab:uniloss}, we observe that \textsc{GreedTok}'s $\cL$ is much closer to \textsc{Unigram}'s $\cL$, relative to \textsc{BPE}'s. 

\paragraph{Conclusion of token set investigation.} From the three analyses, there are two key findings. First, \textsc{GreedTok}'s compression ability can be explained by selecting a large proportion of tokens that \textsc{BPE} selects, and further improving it by selecting what \textsc{BPE} could not select, i.e. some tokens selected by \textsc{Unigram}. Second, there are indicators that \textsc{GreedTok} may display \textsc{Unigram}'s pre-training quality. This is observed from \textsc{GreedTok} having a higher proportion of common tokens and a closer $\cL$ to \textsc{Unigram} relative to \textsc{BPE}. We investigate and confirm this in \cref{sec:language-modeling}, where we conduct language pre-training on similar models trained on \textsc{BPE} versus \textsc{GreedTok} tokens.

\section{Additional Information}

\subsection{Additional Corpora Information}
\label{sec:appendix-corpus}

In this subsection, we describe the corpora used in our compression experiments and detail any preprocessing steps.

\paragraph{United Nations General Debate Corpus (UN).} 
\texttt{UN} \cite{UN_data} is a collection of statements made by various member states during the General Debate on key geopolitical issues from 1946 to 2022. This corpus has a Creative Commons (CC) 0: Public Domain License.

\paragraph{ar$\chi$iv.}
This corpus\footnote{Available at: \url{kaggle.com/datasets/Cornell-University/arxiv}.} is a collection of abstracts of scientific publications and preprints from the popular e-Print archive. This corpus has a CC0: Public Domain License.

\paragraph{Wikipedia-English (wiki).}
An extensive collection of English articles on a wide range of things, concepts, and people. We extract \cite{Wikiextractor2015} the text from the database dump.\footnote{Available at: \url{https://dumps.wikimedia.org/}.} We also conduct a performance ablation with articles containing Chinese, Japanese and Korean (CJK) languages. The texts belonging to these articles are under CC BY-SA 4.0 and GNU Free Documentation Licenses.

\paragraph{PubMed Central Open Access (PubMed).}
Similar to \texttt{ar$\chi$iv}, \texttt{PubMed}\footnote{Available at: \url{https://pmc.ncbi.nlm.nih.gov/tools/openftlist/}.} is a repository of publications and preprints, mainly related to health, medicine, biotechnology, and pharmaceuticals. We select the Non-Commercial Use Only subset grouped by: CC BY-NC, CC BY-NC-SA, and CC BY-NC-ND licenses. We preprocessed the text minimally, removing citations and headers.

\paragraph{RefinedWeb.} This corpus \cite{penedo2023refinedweb} is a filtered set of CommonCrawl \cite{commoncrawl2023} with stringent filtering and extensive deduplication. It was used to pretrain language models with 1B and 7B parameters. The corpus has a ODC-By 1.0 license. 

\paragraph{DCLM full-dedup.} This corpus is built from DCLM \cite{li2024datacomp} using a deduplication process that was used to build Zyda-2 \cite{zyphra_nvidia_2024}. This corpus is licensed under CC-by-4.

\subsection{Model Pre-training Information}
\label{sec:appendix-pretraining}
Our pre-training corpus is \texttt{DCLM full-deduped} dataset. 
For model training, we use the Dolomite Engine \cite{mishra2024dolomite}. 
Our model architecture is a 40-layer Transformer \cite{vaswani2017attentionneed}, with embedding size 1536, MLP using SwiGLU activation \cite{shazeer2020gluvariants} with intermediate size of 4096, and GQA \cite{ainslie2023gqa} layers with 12 query heads and 4 pairs of KV-heads. We used a fixed context length of 4096 tokens and a batch size of $2^{22} \approx 4\mathrm{M}$ tokens.
We train \textsc{GreedTok} and \textsc{BPE} tokenizers on approximately 20\% of the dataset, randomly selected.
For \textsc{BPE}, we use the popular implementation from Huggingface's tokenizer API.\footnote{Available at: \url{https://github.com/huggingface/tokenizers}.}
During training, we follow the same learning rate schedule as \cite{shen2024power}, training on the dataset in two phases: the Power Scheduler \cite{shen2024power} in phase 1 and a learning rate with exponential decay in phase 2.
However, our phases 1 and 2 use a similar data mixture, sampling the first 500M documents for phase 1, and the next 100M documents for phase 2, which always use 20\% of the training iterations of phase 1. 
We run our experiments using NVIDIA H100 80GB HBM3 cluster, with 96 logical CPU count, training at a rate of $\sim$400B tokens/day.

Evaluating GTET is analogous to a setting where model training is compute-constrained, and users can use more text data for training.
Conversely, evaluating GTEP is analogous to a setting where model training is data-constrained, and users have a limited amount of text for training.
In the \textsc{BPEM} and \textsc{GTET} settings, we train the model for 125,000 and 25,000 iterations in phases 1 and 2 respectively. 
For \textsc{GTEP}, we take the model checkpoint of \textsc{GTET} at the 100,000th training iteration step, followed by an additional 20,000 training iterations in phase 2.

\subsection{Additional Benchmark Information}
\label{sec:appendix-benchmarks}

We use the default settings of \texttt{Language Model Evaluation Harness} \cite{eval-harness} for our benchmark evaluations.

\paragraph{ARC-Easy (ARC-e).} A subset of easy questions from the \texttt{Abstraction and Reasoning Corpus} (ARC) dataset \cite{clark2018arc} contains 2,376 grade school level multiple-choice questions that were answered correctly with retrieval-based algorithm and a word co-occurrence algorithm. Each question has 4 answer options.

\paragraph{ARC-Challenge (ARC-c).} Another subset of questions from the ARC dataset, the test set comprises 1,172 grade school level multiple-choice questions that were incorrectly answered with retrieval-based algorithm and a word co-occurrence algorithm. Similar to the easy subset, each question has 4 answer options.

\paragraph{HellaSwag.} This challenge set \cite{zellers2019hellaswag} evaluates sentence completion in a multiple-choice setting. Given 4 possible answers, the correct answer completes the given sentence best. The test set contains 10,003 question sets.

\paragraph{OpenBook QA (OBQA).} This dataset \cite{mihaylov2018openbookqa} contains multiple-choice questions that require additional reasoning and knowledge in addition to the information included in the question and its 4 answer choices. We use the main set containing 500 test question sets.

\paragraph{Physical Interaction: Question Answering (PIQA).} This dataset \cite{bisk2020piqa} contains binary-choice physical commonsense questions. Additional knowledge of how physical materials interact is required to successfully answer the questions. We use the validation set of 2,000 question sets.

\paragraph{SciQ.} This dataset \cite{welbl2017sciq} contains multiple-choice science questions that were crowdscourced. Each question has 4 answer choices. We use its test subset of 1000 question sets.

\paragraph{BoolQ.} This dataset \cite{clark2019boolq} contains yes/no questions. Each question is accompanied by a corresponding text passage. We use the validation set of 3,270 triplets of question, passage, and answer.

\paragraph{Choice of Plausible Alternatives (COPA).} This dataset \cite{roemmele2011choice} seeks to evaluate commonsense reasoning, where given a premise and two plausible causes or effects, the correct answer is the option that is more plausible than the other. We use the validation set of 500 question sets.

\paragraph{LAMBADA.} This dataset \cite{paperno2016lambada}, abbreviated from \texttt{\textbf{LA}nguage \textbf{M}odeling \textbf{B}roadened to \textbf{A}ccount for \textbf{D}iscourse \textbf{A}spects}, contains text passages with the given task of predicting the last word of the target sentence. The task was crafted in a manner that the target word cannot be predicted by the target sentence, requiring information from other parts of the given text passage. We use its test set containing 5,153 text passages.

\paragraph{ReAding Comprehension dataset from Examinations (RACE).} This dataset \cite{lai2017race} contains text passages with an associated multiple-choice question with four possible answer options. The questions were collected from English examinations in China. We use the high school level test set containing 3,498 passage-question sets.

\paragraph{Winogrande.} This dataset \cite{sakaguchi2019winogrande} follows the Winograd schema \cite{levesque2011winograd}, where the task is a fill-in-a-blank for a given sentence and two options, with additional commonsense reasoning required. We use its test set containing 1,767 questions.

\paragraph{Wikitext bits/byte.} We evaluate the bits/byte metric using Wikitext2 \cite{merity2016pointer}. A lower value for this metric implies that less information (bits) is required to make a correct next-token (byte) prediction.
\section{Additional Pseudocode}
\label{sec:appendix-pseudocode}

Previously, in our MIP (\cref{sec:MIP}), a 1-based indexing system was used.
However, for implementation convenience, we use a 0-based indexing system for our pseudocodes instead.
Given an ordered sequence $\cS$, such as array $A$, string $W$, and selected tokens $\bS$, we use $\cS_i$ to specify an element in the $i^{th}$ index of $\cS$.
However, for sequences, we use $\cS_{i,j}$ to specify the elements from the $i^{th}$ index up to, but excluding, the $j^{th}$ of $\cS$.
For example, when $\cS = \text{happy}$, we have $\cS_1 = \text{a}$ and $\cS_{1,3} = \text{ap}$.

\subsection{Computing \texorpdfstring{$\cS$}{S} from \texorpdfstring{$\cT$}{T}}

Given the \textsc{Count} function, corpus $\bW$, candidate tokens $\bT$, and an integer $k$, \cref{alg:greedy-main} finds a set of tokens $\bS$ that maximizes the objective function with the help of subroutines \cref{alg:endpoints-uncovered-condition-check} and \cref{alg:greedy-score}.
The algorithm \cref{alg:greedy-main} defines a couple of dictionaries $\bM$, $\bP$, $\bR$, and $\bI$ to track the problem state, then greedily picks the next best scoring token to cover words:
\begin{enumerate}
    \item $\bM$ maps each word $W$ to its state of cover, similar to the definition in \cref{sec:MIP}
    \item $\bP$ maps each token $T$ to the set of its occurrences in the given word $W$, for all $W\in\bW$, in a ($W$, $i$) pair, where $i$ is the position index of the start of the token occurrence
    \item $\bR$ stores the net objective gain of each $T$, which we use to greedily select the next best token in \cref{{lst:main:greedy_pick}}
    \item $\bI$ maps each token $T$ to an index, which we use to update the state of cover for all $W \in \bW$ at \cref{lst:main:update}
\end{enumerate}

\begin{algorithm}[htb]
\caption{\textsc{GreedTok}: Computing $\bS$}
\label{alg:greedy-main}
\begin{algorithmic}[1]
\Require \textsc{Count} function, corpus words $\bW$ where $|W| \geq 2$ for all $W \in \bW$, candidate tokens $\bT$, integer $k$
\State Initialize dictionary $\bM : \bW \to \N^+$ with \label{lst:main:bM}
\Comment{State of the algorithm}
\[
\bM(W) =
(m^W_{0,1},\dots,m^W_{|W|-2,|W|-1}) = (0,\ldots,0) = 0^{|W|-1} \qquad \text{for all $W \in \bW$}
\]
\State Initialize dictionary $\bP: \bT \to (\bW \times \N)^*$ with $\bP(T) = \{ (W, i) \in \bW \times \bN : W_{i, i+|T|} = T\}$ for all $T\in\bT$ \label{lst:main:bP}
\Statex\Comment{Positioning information of tokens in words}
\State Initialize dictionary $\bR: \bT \to \N$ with $\bR(T) = 0$ for all $T \in \bT$ \Comment{Token scores given current state}
\State Initialize dictionary $\bI: \bT \to \N$ where $\bI(T) = 0$ for all $T \in \bT$ \label{lst:main:c} \Comment{Token indices in $\bS$}
\State Initialize $\bS$ as an empty sequence \label{lst:main:bR}
\State Compute scores $\bR(T)$ for each $T \in \bT \setminus \bS$ using \textsc{Score} on current state $\bM$ \label{lst:main:score} \Comment{\cref{alg:greedy-score}}
\State Initialize $\bQ$ as a priority queue of $\bR$
\State Pop next best token $\tau = \argmax_{T \in \bT}$ from $\bQ$ and add to back of $\bS$ \Comment{Skip checking first token}
\label{lst:main:greedy_pick}
\While{$|\bS| < k$}
    \State Pop next best token candidate $\hat{\tau} = \argmax_{T \in \bT}$ from $\bQ$
    \If{new score $\hat{\bR}(\hat{\tau}) \neq \bR(\hat{\tau})$} \Comment{\cref{alg:greedy-score}}\label{lst:main:score_update}
        \State ${\bR}(\hat{\tau}) \leftarrow \hat{\bR}(\hat{\tau})$
        \State Push $\hat{\tau}, {\bR}(\hat{\tau})$ back into $\bQ$
    \Else
        \State Append $\hat{\tau}$ to the back of $\bS$ and then update $\bI(\hat{\tau}) = |\bS|$
        \For{$(W, i) \in \bP(\tau)$}
            \If{$\textsc{CanCover}(\bM, \tau, W, i)$} \label{lst:main:overlap_check}
                \State Update each entry of $\bM(W)_{i,i+|\tau|-1}$ to $\bI(\tau)$ \label{lst:main:update} \Comment{Update states of $\bM(\tau)$}
            \EndIf
        \EndFor
    \EndIf
\EndWhile
\State return $\bS$
\end{algorithmic}
\end{algorithm}

\begin{algorithm}[htb]
\caption{\textsc{CanCover}: Check if $W_{i,i+|T|-1}$ is coverable by $T$ in current state $\bM$}
\label{alg:endpoints-uncovered-condition-check}
\begin{algorithmic}[1]
\Require Current state $\bM$, token $T$, word $W$, position index $i$
\State \Return ($i = 0$ or $\bM(W)_{i-1} = 0$) and ($i+|T| = |W|$ or $\bM(W)_{i+|T|-1} = 0$)
\end{algorithmic}
\end{algorithm}

\begin{algorithm}[htb]
\caption{\textsc{Score}: Calculate total number of possible covers}
\label{alg:greedy-score}
\begin{algorithmic}[1]
\Require Token $T$, token positions $\bP(T)$, current state $\bM$, $\textsc{count}$ function
\State Make a copy $\bM'$ of the state $\bM$ \Comment{The original state remains unchanged}
\State Initialize token score $s = 0$
\For{$(W, i) \in \bP(T)$}
    \If{$\textsc{CanCover}(\bM', T, W, i)$}
        \State Add $\textsc{count}(W) \cdot |\{j \in \{i, \ldots, i+|T|\} : \bM'(W)_j = 0 \}|$ to $s$
        \Comment{Only add score for non-zero entries}
        \State Update each entry of $\bM'(W)_{i,i+|T|-1}$ to $1$ \Comment{Mark to avoid double counting; see \cref{example:score-double-counting}}
    \EndIf
\EndFor
\State \Return $s$
\end{algorithmic}
\end{algorithm}

The subroutine \cref{alg:endpoints-uncovered-condition-check} encapsulates a check of the validity of using a given token $T$ to cover $W$ at position $i$, primarily by observing if the non-start/end endpoint positions $i$ and $i+|T|$ were previously covered by some other token previously; if such a token is present, then $T$ cannot cover $W$ at position $i$.
Meanwhile, the subroutine \cref{alg:greedy-score} calculates the score contribution by token $T$, given the current state $\bM$, while accounting for previous covers applied from chosen tokens in $\bS$.

\begin{example}[Valid coverings and two sample traces]
\label{example:valid_cover}
Consider the example where $\bT = \{T_1 = \text{pa}, T_2 = \text{ya}, T_3 = \text{ap}\}$ and $\bW = \{W_1 = \text{papaya}, W_2 = \text{impact}\}$.
Then, we have $\bP(T_1) = \{(W_1, 0), (W_1, 2), (W_2, 2)\}$, indicating that the token $T_1$ appears in $W_1$ at positions $0$ and $2$, and in $W_2$ at position $2$.
Using \textsc{Score} (\cref{alg:greedy-score}) to update $\bR$ would yield $\bR(\text{pa}) = 3$, $\bR(\text{ya}) = 1$, and $\bR(\text{ap}) = 1$, so the greedy step \cref{lst:main:greedy_pick} of \cref{alg:greedy-main} would first select token $T_1$ to be included into $\bS$.
Initially, we have $\bM(W_1= \text{papaya}) = (0,0,0,0,0)$.
After selecting $T_1$ into $\bS$, we have $\bM(W_1) = (1,0,1,0,0)$.
Recalculating the scores using \textsc{Score} on the updated state $\bM$ would yield $\bR(\text{pa}) = 0$, $\bR(\text{ya}) = 1$, and $\bR(\text{ap}) = 0$, so the token $T_2$ would be selected next.
After selecting $T_2$ into $\bS$, we have $\bM(W_1) = (1,0,1,0,2)$ because $\bI(T_1 = \text{pa}) = 1$ and $\bI(T_2 = \text{ya}) = 2$.
One can see that the zero and non-zero locations in $\bM$ indicate partition and coverage respectively.
Now, ignoring the scoring function, let us instead suppose that we selected $T_3 = \text{ap}$, $T_1 = \text{pa}$, and finally $T_2 = \text{ya}$.
When we first selected $T_3 = \text{ap}$, the state of $W_1 = \text{papaya}$ will become $M(W_1) = (0,1,0,0,0)$ with $\bI(T_3 = \text{ap}) = 1$.
Next, consider the token $T_1 = \text{pa}$ that appears at positions $0$ and $2$ of the word $W_1$.
At position $0$, we see that $\bM(W_1)_{i+|T|-1=0+2-1=1}=1 \neq 0$.
Meanwhile, at position 2, we have $\bM(W_1)_{i-1=2-1}=1 \neq 0$.
Since there is at least one non-start/end endpoint positions already covered by a token, we \emph{cannot} further use $T_1$ in $W_1$.
Finally, let us consider using token $T_2 = \text{ya}$, which appears at position 4 of $W_1$.
We see that $i > 0$, $\bM(W_1)_{i-1} = 0$, and  $i+|T_2|<|W_1|$, we can cover $W_1$ with $T_2$ at position 4, resulting in $M(W_1) = (0,1,0,0,2)$ with $\bI(\text{ya}) = 2$.
Note that we do not need to check $\bM(W_1)_{i+|T_2|-1}$ because $i+|T_2|<|W_1|$.
\end{example}

\begin{example}[State copying and overcounting]
\label{example:score-double-counting}
Here, we explain why we require a copy of the state in \cref{alg:greedy-score} to avoid the overcounting of overlapping repeating substrings. 
Consider the example of $T_1 = \text{aya}$, $W_1 = \text{ayaya}$, and $\textsc{count}(W_1) = 1$, where $\bP(T_1) = \{ (W_1,0), (W_1,2) \}$ and $\bM(W_1) = \bM'(W_1) = (0,0,0,0)$ initially.
In this case, we see that $T_1$ would obtain a score of 1 either by covering $W_1$ at position 0 (i.e.\ \underline{aya}ya) or position 2 (i.e.\ ay\textbf{aya}), but not both positions simultaneously (i.e.\ \underline{ay\textbf{a}}\textbf{ya}).
To see how \cref{alg:greedy-score} ensures this, let us suppose we considered $(W_1, 0)$ then $(W_1, 2)$ in the for loop iteration.
As the endpoints of $(W_1,0)$ are coverable, we update $\bM'(W_1)$ to $(1,1,0,0)$.
Note that $\bM(W_1)$ still remains unchanged as we have yet to confirm that $T_1$ is the next best token $\tau$.
With the updated state $\bM'$, we see that the next pair $(W_1, 2) \in \bP(T_1)$ is an invalid cover since $\bM'(W_1)_{2-1=1} = 1 \neq 0$, which prevents an overcounting.
We remark that the choice of updating entries to $1$ is arbitrary (i.e.\ any non-zero value will work) and that one can actually avoid explicitly making a copy of the state in implementation by performing checks in an appropriate manner.
\end{example}

\paragraph{Runtime complexity for computing $\cS$.}
Each call to \textsc{CanCover} (\cref{alg:endpoints-uncovered-condition-check}) runs in $O(1)$ time.
Fix an arbitrary iteration of the while loop in \cref{alg:greedy-main}.
Each call to \textsc{Score} (\cref{alg:greedy-score}) with token $T$ runs in $O(\sum_{W \in \bW} |W|)$ time because it iterates through each position in $\bP(T)$ once and considers if $T$ is a valid cover for that position.
While we update $\bM(W)$ during the iteration, due to \textsc{CanCover} (\cref{alg:endpoints-uncovered-condition-check}), each index is updated at most once to a non-zero value, i.e.\ \cref{example:score-double-counting}, resulting in at most $O(\sum_{W \in \bW} |W|)$ total number of updates. 
Therefore, applied across all tokens $T \in \bT$, $k$ number of times, \cref{alg:greedy-main} takes $O(|\bT| \cdot k \cdot \sum_{W \in \bW} |W|)$ time to compute $\cS$. 
Empirically, we observe that our lazy strategy scales like $\Theta(|\bT| \cdot \sum_{W \in \bW} |W|)$.
Scores can be greedily updated in small batches of next-best candidate tokens (\cref{lst:main:score_update} in \cref{alg:greedy-main} is equivalent to batch size 1), which typically suffices to identify the next-best token to add; we do not need to perform $|\bT|$ score updates at every iteration.
As a result, the cumulative cost over iterations likely remains much smaller than the naive bound, $\sum_{i\in [1,k]} \#updates_i \in O(|\bT|)$. 

\paragraph{Additional implementation remarks.}
In practice, it is possible to adopt alternative data representations.
For example, instead of a dictionary, one could represent $\bM$ as a single contiguous array and define a given word $W$ as a position in the array.
One could also use a representation of length $|W|$ for each word instead of the $(|W|-1)$-sized representation discussed in \cref{lst:main:bM} and \cref{sec:MIP}.
For example, covering the word $W = \text{papaya}$ by token $T_1 = \text{pa}$ could be represented by $(1,1,1,1,0,0)$ instead of $(1,0,1,0,0)$.
However, in the $(1,1,1,1,0,0)$ representation, it is impossible to discern a partition and one has to keep track of additional information regarding duplicates of tokens within the same word.
Furthermore, one can avoid redundant calculations of $\bP$ by tracking and only recalculating the affected $T$ in words covered by the current $\tau$.

\subsection{Tokenizing a text \texorpdfstring{$W$}{W} using \texorpdfstring{$\cS$}{S}}

In \cref{alg:greedy-tokenize-main}, we describe how to encode a given text $W$ into its token representation using the token set $\bS$ from \cref{alg:greedy-main}. 
First, in \cref{lst:tokenize:cT}, we initialize a dictionary to map our tokens in $\bS$ according to their order of inclusion to $\bS$, and then place singleton tokens $\bB$ at the back of the sequence.
Next, in \cref{lst:tokenize:bP}, we find all possible token covers of $W$ using tokens in $\bS$ and sort them in \cref{lst:tokenize:sort} according to their priority $\bI$ and a left-to-right ordering in $W$.
Using $\bM$ to denote which token covers which position index of $W$, we iterate through $(T,i)$ in the sorted $\bP$ and update $\bM$ whenever the token $T$ can cover $W$ at position $i$ given earlier decisions.
Note that this may mean that a later token of longer length may overwrite the covering decision of an earlier shorter token; see \cref{example:overriding}. Finally, using $\bM$, we return the 0-delineated token representation; see \cref{example:encoding-output}.

\begin{algorithm}[htb]
\caption{\textsc{GreedTok}: Tokenizing a given text $W$ using $\bS$}
\label{alg:greedy-tokenize-main}
\begin{algorithmic}[1]
    \Require Text $W$, singleton tokens $\bB$, chosen token sequence $\bS$
    \State Initialize dictionary $\bI: \bS \cup \bB \to \N$ with
    \[
    \bI(T) =
    \begin{cases}
    i & \text{if $T$ is $i^{th}$ chosen token in $\bS$}\\
    |\bS| + i & \text{if $T$ is $i^{th}$ token in $\bB$}
    \end{cases}
    \]
    \label{lst:tokenize:cT} \Comment{Fix an arbitrary ordering to singleton tokens}
    \State Initialize potential cover positions $\bP = \{ (T, i): T \in \bS, W_{i, i+|T|} = T\}$\label{lst:tokenize:bP}
    \State Sort $\bP$ based on $\bI(T)$, then position $i$, with lower values having greater priority\label{lst:tokenize:sort}
    \State Initialize state $\bM = \{m_{0,1}, \dots, m_{|W|-2, |W|-1}\} = 0^{|W|-1}$
    \For{$(T,i) \in \bP$ in descending sorted order of \cref{lst:tokenize:sort}}
        \If{$\textsc{CanCover}(\bM, T, W, i)$} 
            \State Update each entry of $\bM_{i,i+|T|-1}$ to $\bI(T)$ \label{lst:tokenize:update}
        \EndIf
    \EndFor
    \State \Return $W$ delineated at positions of $0$ \Comment{See \cref{example:encoding-output}}
\end{algorithmic}
\end{algorithm}

\begin{example}[Overriding earlier shorter tokens]
\label{example:overriding}
Consider the encoding of $W_1 = \text{abcdefg}$ with $\bS = ( S_1 = \text{ab}, S_2 = \text{cd}, S_3 = \text{ef}, S_4 = \text{abc}, S_5 = \text{abcd}, S_6 = \text{efg}, S_7 = \text{abcdefg} )$.
In the first three iterations, we use $S_1$, $S_2$, and $S_3$ to cover $W$, resulting in $\bM = (1,0,2,0,3,0)$.
Then, we see that $S_4$ does not have any valid covers and so $\bM$ remains unchanged.
In the fifth and sixth iterations, notice that $S_1, S_2 \subset S_5$ and $S_3 \subset S_6$, resulting in $S_5$ and $S_6$ being valid covers and $\bM$ being updated to $(5,5,5,0,6,6)$.
Finally, since $S_5, S_6 \subset S_7$ and $S_7$ is a valid cover with respect to the current state, $\bM$ becomes $(7,7,7,7,7,7)$.
Now, consider another scenario of encoding $W_2 = \text{abcd}$ using $\bS = (S_8 = \text{ab}, S_9 = \text{abc}, S_{10} = \text{abcd})$, where $S_8 \subset S_9 \subset S_{10}$.
Covering $W_2$ using $S_8$ results in $\bM = (1,0,0)$.
Then, using $S_9$ results in $\bM = (2,2,0)$.
Finally, using $S_{10}$ results in $\bM = (3,3,3)$.
In both examples, we see that covers are only overridden by proper supersets that appear later in the ordering of $\bS$, where the largest valid cover in $\bS$ for $W$ is of size $|W|$.
Furthermore, recall that the token covers of any valid covering do not overlap so they jointly take up at most $|W|$ positions in total.
As such, we see that each position $\bM_i \in \bM$ is updated at most $|W|$ times and thus, across all $|W|$ positions, \cref{alg:greedy-score} updates values in $\bM$ a maximum of $O(|W|^2)$ times.
\end{example}

\begin{example}[Encoding the tokenized output]
\label{example:encoding-output}
If $W = \text{abcdef}$, $\bS = \{ S_1 = \text{bcd}, S_2 = \text{ef} \}$ and $\bM = (0,1,1,0,2)$, then $W$'s final tokenized output will be $(\text{a}, \text{bcd}, \text{ef})$. If one wishes to convert the tokens to integers with respect to token indexing, simply apply $\bI$ to each token to get $(\bI(\text{a}), \bI(\text{bcd}), \bI(\text{ef}))$.
\end{example}

\paragraph{Runtime complexity for tokenizing $W$ using $\cS$.}
Each call to \textsc{CanCover} (\cref{alg:endpoints-uncovered-condition-check}) runs in $O(1)$ time.
There are at most $\binom{|W|}{2} \in O(|W|^2)$ substrings of $W$ and so \cref{lst:tokenize:bP} runs in $O(|W|^2)$ time, $|\bP| \in O(|W|^2)$ and sorting $\bP$ takes $O(|W|^2 \log(|W|)$ time.
Since each index can only be overwritten when a longer token covers it, in \cref{lst:tokenize:update}, we see that each position in $\bM$ is only updated at most $|W|$ times, and therefore a maximum of $|W|^2$ for all positions in $|\bP|$ number of iterations; see \cref{example:overriding}.
Thus, the entire for loop takes $O(|\bP| + |W|^2) \subseteq O(|W|^2)$ time to iterate through $\bP$ and to update all $O(|W|)$ positions in $\bM$.

\paragraph{Additional implementation remarks.}
In practice, we limit the subsequence search to the maximum token length $\ell = \max_{T\in\bS} |T|$, with early stopping.
To reduce $|W|$ even further, we have to go beyond regex and identify smaller local sections within $W$ so that we can independently tokenize these sections. 
This is possible as $\bS$ inadvertently learns the regex pattern and more during its construction.
This implies that we can also further infer natural separations within $W$ where no $T\in\bS$ overlaps with another.


\end{document}